\newif\ifanonymous
\def\eqref#1{equation~\ref{#1}}
\def\1{\bm{1}}
\def\vb{{\bm{b}}}
\def\vx{{\bm{x}}}
\def\vz{{\bm{z}}}
\def\mA{{\bm{A}}}
\def\mB{{\bm{B}}}
\def\mC{{\bm{C}}}
\def\mD{{\bm{D}}}
\def\mI{{\bm{I}}}
\def\mP{{\bm{P}}}
\def\mQ{{\bm{Q}}}
\def\mW{{\bm{W}}}
\def\mZ{{\bm{Z}}}
\DeclareMathAlphabet{\mathsfit}{\encodingdefault}{\sfdefault}{m}{sl}
\SetMathAlphabet{\mathsfit}{bold}{\encodingdefault}{\sfdefault}{bx}{n}
\newcommand{\R}{\mathbb{R}}
\newcommand{\Var}{\mathrm{Var}}
\newcommand{\Cov}{\mathrm{Cov}}
\DeclareMathOperator*{\argmax}{arg\,max}
\DeclareMathOperator*{\argmin}{arg\,min}
\crefname{algocf}{alg.}{algs.}
\Crefname{algocf}{Algorithm}{Algorithms}
\theoremstyle{plain}\newtheorem{lemma}{Lemma}
\theoremstyle{plain}\newtheorem*{lemma*}{Lemma}
\theoremstyle{definition}\newtheorem{definition}{Definition}[section]
\theoremstyle{definition}\newtheorem{conjecture}{Conjecture}
\DeclareMathOperator{\tr}{tr}
\renewcommand{\vec}{\text{vec}}
\newcommand{\todocite}{{\color{red} [moar citationz] }}
\newcommand{\Pl}{\textcolor{purple}{\mP_\ell}}
\title{Git Re-Basin: Merging Models Modulo Permutation Symmetries}
\author{Samuel K. Ainsworth, Jonathan Hayase, Siddhartha Srinivasa \\
Paul G. Allen School of Computer Science and Engineering\\
University of Washington\\
\texttt{\{skainswo,jhayase,siddh\}@cs.washington.edu}}
\begin{document}

\maketitle

\begin{abstract}
The success of deep learning is due in large part to our ability to solve certain massive non-convex optimization problems with relative ease. Though non-convex optimization is NP-hard, simple algorithms -- often variants of stochastic gradient descent -- exhibit surprising effectiveness in fitting large neural networks in practice. We argue that neural network loss landscapes often contain (nearly) a single basin after accounting for all possible permutation symmetries of hidden units a la \citet{Entezari2021}. We introduce three algorithms to permute the units of one model to bring them into alignment with a reference model in order to merge the two models in weight space. This transformation produces a functionally equivalent set of weights that lie in an approximately convex basin near the reference model. Experimentally, we demonstrate the single basin phenomenon across a variety of model architectures and datasets, including the first (to our knowledge) demonstration of zero-barrier linear mode connectivity between independently trained ResNet models on CIFAR-10. 
Additionally, we investigate intriguing phenomena relating model width and training time to mode connectivity. 
Finally, we discuss shortcomings of the linear mode connectivity hypothesis, including a counterexample to the single basin theory.

\end{abstract}

\section{Introduction}

We investigate the unreasonable effectiveness of stochastic gradient descent (SGD) algorithms on the high-dimensional non-convex optimization problems of deep learning. In particular,% we are motivated by three questions that allude to unique phenomena:
\begin{enumerate}[leftmargin=10mm]
\item Why does SGD thrive in optimizing high-dimensional non-convex deep learning loss landscapes despite being noticeably less robust in other non-convex optimization settings, like policy learning~\citep{DBLP:conf/l4dc/AinsworthLTHS21}, trajectory optimization~\citep{DBLP:journals/siamrev/Kelly17}, and recommender systems~\citep{DBLP:conf/aaai/KangPC16}? % This is in spite of non-convex optimization being NP-hard.
\item What are all the local minima? When linearly interpolating between initialization and final trained weights, why does the loss smoothly and monotonically decrease~\citep{DBLP:journals/corr/GoodfellowV14, Frankle2020_revisiting,DBLP:conf/icml/LucasBZFZG21, Frankle2021_what_can_tell_us}?
\item How can two independently trained models with different random initializations and data batch orders inevitably achieve nearly identical performance? Furthermore, why do their training loss curves often look identical?
\end{enumerate}

% We posit that the final phenomenon referenced in item 3 points to the existence of some yet uncharacterized invariance(s) in the training dynamics that cause independent training runs to exhibit almost identical characteristics. 
We posit that these phenomena point to the existence of some yet uncharacterized invariance(s) in the training dynamics causing independent training runs to exhibit similar characteristics. 
\citet{HechtNielsen1990ONTA} noted the permutation symmetries of hidden units in neural networks; briefly, one can swap any two units of a hidden layer in a network and -- assuming weights are adjusted accordingly -- network functionality will not change. Recently, \citet{DBLP:conf/icml/BentonMLW21} demonstrated that SGD solutions form a connected volume of low loss and \citet{Entezari2021} conjectured that this volume is convex modulo permutation symmetries.
\begin{conjecture}[Permutation invariance, informal \citep{Entezari2021}]
\label{def:conjecture}
Most SGD solutions belong to a set whose elements can be permuted so that no barrier (as in \Cref{def:loss_barrier}) exists on the linear interpolation between any two permuted elements.
\end{conjecture}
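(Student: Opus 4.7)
The statement is a conjecture rather than a provable theorem in the usual sense, so my plan is to set up a formal framework under which one could hope to prove a quantitative version, then sketch an attack tailored to the wide-network regime.

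First I would formalize the statement. Fix a data distribution, a loss $\gL(\vtheta)$, and an architecture of width $n$ with the hidden-unit permutation group $\gS_n$ (or a product thereof for multi-layer networks) acting on parameters $\vtheta$ in the usual way. Let $\vtheta_1, \vtheta_2$ be outputs of two independent SGD runs. Define the barrier $B(\vtheta_1,\vtheta_2) := \sup_{\lambda \in [0,1]} \gL(\lambda \vtheta_1 + (1-\lambda)\vtheta_2) - \max(\gL(\vtheta_1),\gL(\vtheta_2))$ (matching \Cref{def:loss_barrier}). The target claim is then: for every $\eps > 0$, $\Pr_{\vtheta_1,\vtheta_2}\!\bigl[\exists \pi \in \gS_n : B(\vtheta_1,\pi\!\cdot\!\vtheta_2) \leq \eps\bigr] \to 1$ as $n \to \infty$, where the probability is over initializations and data ordering.

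The plan is to factor the proof into two lemmas. The first is a \emph{smoothness lemma}: if the permuted weights are close, $\|\vtheta_1 - \pi\!\cdot\!\vtheta_2\| \leq \delta$, and both endpoints sit in a region where $\gL$ is locally $L$-smooth with gradients bounded along the segment, then $B \leq c \cdot L \delta^2$. This is a routine Taylor estimate using that the midpoint of the segment is within $\delta/2$ of both endpoints. The second, and the real work, is a \emph{matching lemma}: with high probability there exists $\pi$ achieving $\|\vtheta_1 - \pi\!\cdot\!\vtheta_2\| \leq \delta(n)$ with $\delta(n) \to 0$ (suitably normalized). I would attack this via a layerwise greedy argument: given aligned pre-activations at layer $\ell$, solve a linear assignment problem on the rows of $\mW_\ell^{(2)}$ against $\mW_\ell^{(1)}$, propagate the permutation forward, and iterate. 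One of the algorithms the paper introduces is presumably exactly this procedure; the theoretical content is bounding the residual after matching.

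To bound this residual, I would restrict to a regime where SGD admits analysis, e.g.\ the lazy / NTK regime where trained weights satisfy $\vtheta_t = \vtheta_0 + O(1/\sqrt{n})$ in a suitable norm. Two runs then share the same $\vtheta_0$-scale bulk and differ only in a lower-order perturbation, so the matching problem reduces to aligning random Gaussian-like initializations. Here concentration of measure on $\gS_n$-orbits (the minimum of $n!$ nearly-identically-distributed distances concentrates tightly) should give the desired $\delta(n)$ decay, at which point the smoothness lemma finishes the argument. The hard part will be step three: moving beyond the lazy regime to the genuinely feature-learning setting that the experiments in the paper actually exercise. There, the permutation symmetry is the \emph{only} symmetry that two runs share (rotational symmetry is broken by nonlinearity), and establishing that SGD provably converges to a common orbit under $\gS_n$ without additional assumptions on the data remains, to my knowledge, open; I would expect that a full proof needs either strong data-distributional assumptions (e.g.\ a planted teacher network) or a new structural observation about SGD trajectories.
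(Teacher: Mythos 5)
You are being asked about a statement the paper itself never proves: it is a conjecture, attributed to \citet{Entezari2021}, and the paper's support for it is purely algorithmic and empirical (the three matching methods of \Cref{sec:methods}, the zero-barrier ResNet demonstrations), together with a counterexample (\Cref{sec:counterexample}, \Cref{sec:counterexample_appendix}) showing that there exist non-SGD weight assignments for which \emph{no} permutation yields linear mode connectivity. That counterexample is the paper's main theoretical statement about the conjecture, and it implies any proof must use properties of the training algorithm rather than the architecture alone. Your closing caveat --- that a real proof needs data assumptions or a structural fact about SGD trajectories --- is therefore exactly aligned with the paper's position; to that extent your framing is sound.

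However, the proof program you sketch contains a concrete step that fails. Your matching lemma asks for a permutation $\pi$ with $\|\Theta_A - \pi(\Theta_B)\| \leq \delta(n) \to 0$, after which a smoothness argument bounds the barrier by $c L \delta^2$. For networks with hidden-to-hidden weight matrices of size $n \times n$ this is unattainable by a counting argument: a layer permutation supplies only $\log(n!) = O(n \log n)$ nats of freedom, while the layer has $n^2$ independent coordinates, so for two independent Gaussian-like matrices the optimal-permutation distance stays of the same order as the unpermuted distance; concentration over the $n!$ orbit elements cannot close a gap of that size. (The wide-network alignment results you have in mind work for a \emph{single} hidden layer, where each row lives in a fixed input dimension $d$ and the $n$ rows become dense as $n \to \infty$ --- that argument does not propagate to deep square layers.) Consequently the smoothness lemma never gets to fire, and indeed the phenomenon the paper documents is precisely that zero-barrier linear paths exist between permuted solutions that remain \emph{far apart} in weight space; the barrier is controlled by the geometry of the basin along the segment, not by endpoint proximity plus local smoothness. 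Two smaller points: your barrier subtracts $\max\bigl(\mathcal{L}(\Theta_A), \mathcal{L}(\Theta_B)\bigr)$ whereas \Cref{def:loss_barrier} subtracts the mean $\tfrac{1}{2}\bigl(\mathcal{L}(\Theta_A) + \mathcal{L}(\Theta_B)\bigr)$; and your layerwise greedy matching is closer to the uni-directional scheme the paper explicitly argues against in \Cref{sec:failures_of_greedy_uni-directional_matching} than to \Cref{alg:permutation_coordinate_descent}, which optimizes all layers jointly by coordinate descent.
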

% Here, \textit{barrier} refers to the ``bump'' in the loss landscape between the two points. We make this formal in \Cref{sec:background}.
We refer to such solutions as being \textit{linearly mode connected} (LMC)~\citep{DBLP:conf/icml/FrankleD0C20}, an extension of mode connectivity~\citep{DBLP:conf/nips/GaripovIPVW18,DBLP:conf/icml/DraxlerVSH18}. 
% \citet{DBLP:conf/icml/FrankleD0C20} noted that in specific settings solutions can be \textit{linearly mode connected} (LMC)~\citep{DBLP:conf/icml/FrankleD0C20}. 
If true, \Cref{def:conjecture} will both materially expand our understanding of how SGD works in the context of deep learning and offer a credible explanation for the preceding phenomena, in particular.

% In this paper, we attempt to uncover what invariances may be responsible for these three phenomena and the unreasonable effectiveness of SGD in deep learning. The remainder of the paper is structured as follows: We establish notation in \Cref{sec:background}. Then, three novel permutation resolution algorithms are proposed in \Cref{sec:methods}. We illustrate a counterexample in \Cref{sec:counterexample}, and a variety of experiments in \Cref{sec:experiments}.
\paragraph{Contributions.} In this paper, we attempt to uncover what invariances may be responsible for the phenomena cited above and the unreasonable effectiveness of SGD in deep learning. We make the following contributions:
\begin{enumerate}[leftmargin=10mm]
\item \textbf{Matching methods.} We propose three algorithms, grounded in concepts and techniques from combinatorial optimization, to align the weights of two independently trained models. Where appropriate, we prove hardness results for these problems and propose approximation algorithms. Our fastest method identifies permutations in mere seconds on current hardware.
\item \textbf{Relationship to optimization algorithms.} We demonstrate by means of counterexample that linear mode connectivity is an emergent property of training procedures, not of model architectures. We connect this result to prior work on the implicit biases of SGD.
\item \textbf{Experiments, including zero-barrier LMC for ResNets.} Empirically, we explore the existence of linear mode connectivity modulo permutation symmetries in experiments across MLPs, CNNs, and ResNets trained on MNIST, CIFAR-10, and CIFAR-100. We contribute the first-ever demonstration of zero-barrier LMC between two independently trained ResNets. We explore the relationship between LMC and model width as well as training time. Finally, we show evidence of our methods' ability to combine models trained on independent datasets into a merged model that outperforms both input models in terms of test loss (but not accuracy) and is no more expensive in compute or memory than either input model.
\end{enumerate}

\section{Background}
\label{sec:background}

\begin{table}
\label{table:symmetry_counts}
\begin{center}
\begin{tabular}{ll}
\multicolumn{1}{c}{\bf ARCHITECTURE}  &\multicolumn{1}{c}{\bf NUM. PERMUTATION SYMMETRIES}
\\ \hline \vspace{-2mm}\\
MLP (3 layers, 512 width)         &$10\wedge 3498$ \\
VGG16                             &$10\wedge 35160$  \\
ResNet50                          &$10\wedge 55109$ \\
\vspace{-3mm}
\\ \hline
\vspace{-3mm} \\
Atoms in the observable universe &$10\wedge 82$ \\
\end{tabular}
\end{center}
\vspace{-2.5mm}
\caption{\textbf{Permutation symmetries of deep learning models vs. an upper estimate on the number of atoms in the known, observable universe.} Deep learning loss landscapes contain incomprehensible amounts of geometric repetition.}
\end{table}

Although our methods can be applied to arbitrary model architectures, we proceed with the multi-layer perceptron (MLP) for its ease of presentation~\citep{DBLP:books/lib/Bishop07}.
Consider an $L$-layer MLP,
$$
f(\vx; \Theta) = \vz_{L+1}, \quad \vz_{\ell+1} = \sigma( \mW_\ell \vz_\ell + \vb_\ell), \quad \vz_1 = \vx,
$$
where $\sigma$ denotes an element-wise nonlinear activation function. 
% In our experiments we apply a rectified linear activation \todocite, although the methods presented are agnostic to the choice of activation function. 
% Note that our methods are not inherently limited to MLP architectures, but we will proceed with MLPs for their ease of presentation.
Furthermore, consider a loss, $\mathcal{L}(\Theta)$, that measures the suitability of a particular set of weights $\Theta$ towards some goal, e.g., fitting to a training dataset. 

Central to our investigation is the phenomenon of \textit{permutation symmetries} of weight space. Given $\Theta$, we can apply some permutation to the output features of any intermediate layer, $\ell$, of the model, denoted by a permutation matrix $\mP \in S_d$,\footnote{We denote the set of all $d\times d$ permutation matrices -- isomorphic to the symmetric group -- as $S_d$, to the possible chagrin of pure mathematicians.}
$$
\vz_{\ell+1} = \mP^\top \mP \vz_{\ell+1} = \mP^\top \mP \sigma(\mW_\ell \vz_\ell + \vb_\ell) = \mP^\top \sigma(\mP \mW_\ell \vz_\ell + \mP \vb_\ell)
$$
for $\sigma$, an element-wise operator. It follows that as long as we reorder the input weights of layer $\ell+1$ according to $\mP^\top$, we will have a functionally equivalent model. To be precise, if we define $\Theta'$ to be identical to $\Theta$ with the exception of
$$
\mW_\ell' = \mP \mW_\ell, \quad \vb_\ell' = \mP \vb_\ell, \quad \mW_{\ell+1}' = \mW_{\ell+1} \mP^\top,
$$
then the two models are functionally equivalent: $f(\vx; \Theta) = f(\vx; \Theta')$ for all inputs $\vx$. This implies that for any trained weights $\Theta$, there is an entire equivalence class of functionally equivalent weight assignments, not just one such $\Theta$, and convergence to any one specific element of this equivalence class, as opposed to any others, is determined only by random seed. We denote a functionality-preserving permutation of weights as $\pi(\Theta)$.

\begin{wrapfigure}{r}{0.5\textwidth}
\vspace{-10mm}
\begin{center}
\includegraphics[clip,trim=0 0 0 0,width=0.5\textwidth]{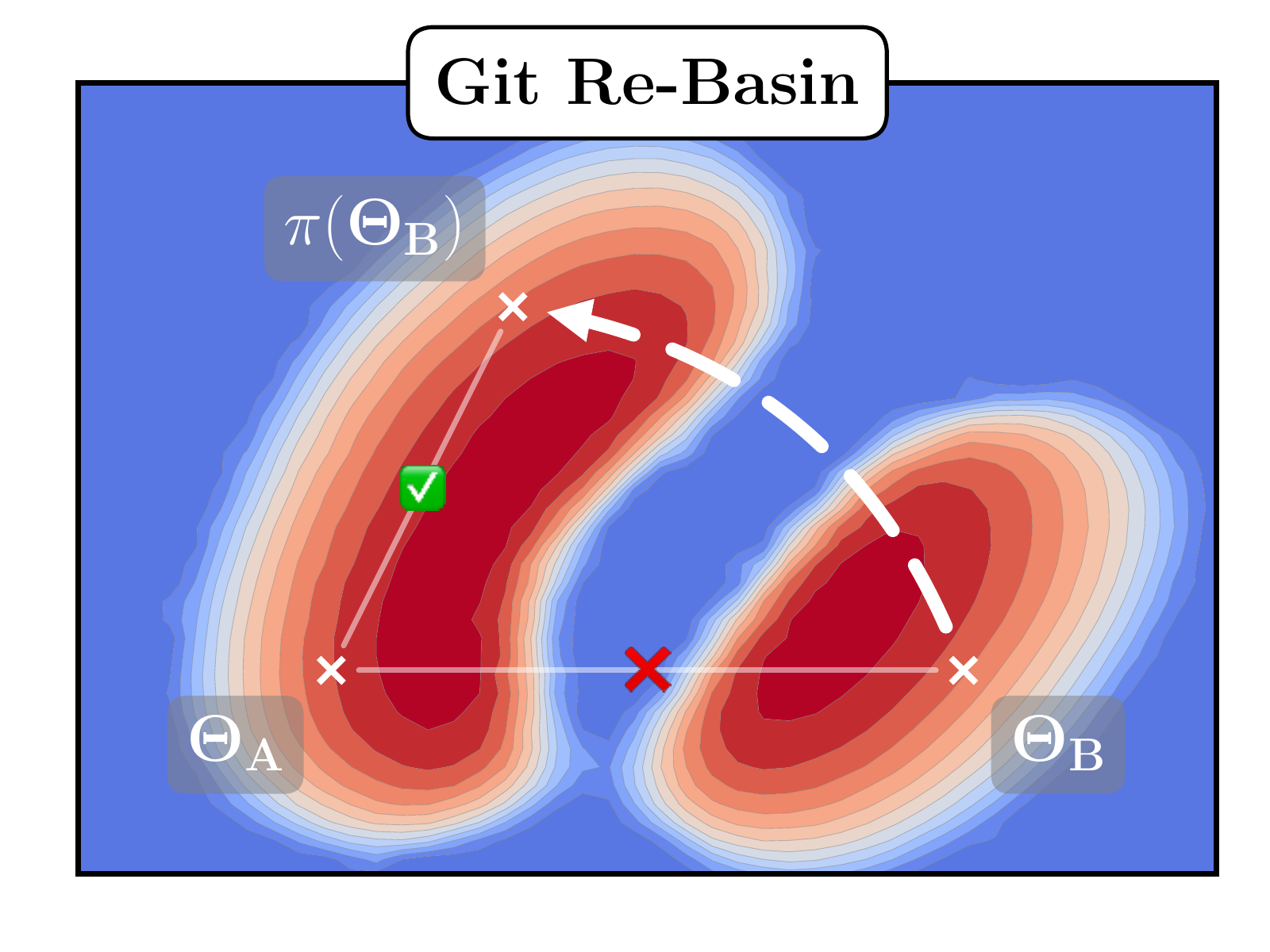}
\end{center}
\vspace{-5mm}
\caption{\textbf{Git Re-Basin merges models by teleporting solutions into a single basin.} $\Theta_B$ is permuted into functionally-equivalent $\pi(\Theta_B)$ so that it lies in the same basin as $\Theta_A$. 
% Moreover, $f(\cdot; \Theta_B) = f(\cdot; \pi(\Theta_B))$.
}
\vspace{-5mm}
\label{fig:loss_contour}
\end{wrapfigure}

Consider the task of reconciling the weights of two, independently trained models, $A$ and $B$, with weights $\Theta_A$ and $\Theta_B$, respectively, such that we can linearly interpolate between them. We assume that models $A$ and $B$ were trained with equivalent architectures but different random initializations, data orders, and potentially different hyperparameters or datasets, as well. Our central question is: Given $\Theta_A$ and $\Theta_B$, can we identify some $\pi$ such that when linearly interpolating between $\Theta_A$ and $\pi(\Theta_B)$, all intermediate models enjoy performance similar to $\Theta_A$ and $\Theta_B$?

We base any claims of loss landscape convexity on the usual definition of multi-dimensional convexity in terms of one-dimensional convexity per 
\begin{definition}[Convexity]
\label{def:convexity}
A function $f : \R^D \to \R$ is convex if every one-dimensional slice is convex, i.e., for all $x,y\in\R^D$, the function $g(\lambda) = f((1-\lambda)x + \lambda y)$ is convex in $\lambda$.
\end{definition}
Due to \Cref{def:convexity}, it suffices to show that arbitrary one-dimensional slices of a function are convex in order to reason about the convexity of complex, high-dimensional functions. In practice, we rarely observe perfect convexity but instead hope to approximate it as closely as possible. Following \cite{DBLP:conf/icml/FrankleD0C20,Entezari2021,DBLP:conf/icml/DraxlerVSH18,DBLP:conf/nips/GaripovIPVW18} and others, we measure approximations to convexity via ``barriers.''
\begin{definition}[Loss barrier~\citep{DBLP:conf/icml/FrankleD0C20}]
\label{def:loss_barrier}
Given two points $\Theta_A, \Theta_B$ such that $\mathcal{L}(\Theta_A)\approx\mathcal{L}(\Theta_B)$, the \textit{loss barrier} is defined as $\max_{\lambda\in[0,1]} \mathcal{L}((1-\lambda)\Theta_A + \lambda\Theta_B) - \frac{1}{2}(\mathcal{L}(\Theta_A)+\mathcal{L}(\Theta_B))$.
\end{definition}
Loss barriers are non-negative, with zero indicating an interpolation of flat or positive curvature.

\section{Permutation Selection Methods}
\label{sec:methods}

% \sam{Conjecture is devilishly clever in that it's impossible to disprove. The astronomical number of permutations makes it impossible to rule out the conjecture on any architecture of non-trivial size. Include a table with counts of perm. symmetries for a few popular models}
We introduce three methods of matching units between model $A$ and model $B$. Further, we present an extension to simultaneously merging multiple models in \Cref{sec:merging_many_models} and an appealing but failed method in \Cref{sec:steepest_descent}.

\subsection{Matching Activations}
\label{sec:matching_activations}

Following the classic Hebbian mantra, ``[neural network units] that fire together, wire together''~\citep{hebb2005organization}, we consider associating units across two models by performing regression between their activations. Matching activations between models is compelling since it captures the intuitive notion that two models must learn similar features to accomplish the same task~\citep{DBLP:journals/corr/LiYCLH15}. Provided activations for each model, we aim to associate each unit in $A$ with a unit in $B$. It stands to reason that a linear relationship may exist between the activations of the two models. 
% In particular, we are interested in a bijection mapping each unit in $A$ to a \textit{unique} unit in $B$ and vice versa. 
We fit this into the regression framework by constraining ordinary least squares (OLS) to solutions in the set of permutation matrices, $S_d$. For activations of the $\ell$'th layer, let $\mZ^{(A)}, \mZ^{(B)} \in \R^{d \times n}$ denote the $d$-dim. activations for all $n$ training data points in models $A$ and $B$, respectively. Then,
% We overload notation to let $\vz_\ell^{(A)}, \vz_\ell^{(B)}$ additionally denote random variables induced on the activations at layer $\ell$ by the training data distribution, of models $A$ and $B$ respectively. 
% \begin{align}
% \mP_\ell &= \argmin_{\mP\in S_d} \; \sum_{i=1}^n  \|\mZ_{:,i}^{(A)} - \mP \mZ_{:,i}^{(B)}\|^2 \\
% &= \argmin_{\mP\in S_d} \; \|\mZ^{(A)} - \mP \mZ^{(B)}\|_F^2 \\
% % &= \argmin_{\mP\in S_n} \; \|\mZ^{(A)}\|_F^2 - 2 \langle \mZ^{(A)}, \mP \mZ^{(B)}\rangle_F + \|\mP \mZ^{(B)}\|_F^2 \\
% % &= \argmax_{\mP\in S_n} \; \langle \mZ^{(A)}, \mP \mZ^{(B)}\rangle_F \\
% &= \argmax_{\mP\in S_d} \; \langle \mP, \mZ^{(A)} (\mZ^{(B)})^\top\rangle_F \label{eq:lap}
% \end{align}
% \begin{align}
% \mP_\ell \,=\, \argmin_{\mP\in S_d} \; \sum_{i=1}^n \|\mZ_{:,i}^{(A)} - \mP \mZ_{:,i}^{(B)}\|^2 \,=\, \argmin_{\mP\in S_d} \; \|\mZ^{(A)} - \mP \mZ^{(B)}\|_F^2 \,=\, \argmax_{\mP\in S_d} \; \langle \mP, \mZ^{(A)} (\mZ^{(B)})^\top\rangle_F \label{eq:lap}
% \end{align}
\begin{align}
% \mP_\ell \,=\, \argmin_{\mP\in S_d} \; \sum_{i=1}^n \|\mZ_{:,i}^{(A)} - \mP \mZ_{:,i}^{(B)}\|^2 \,=\, \argmin_{\mP\in S_d} \; \|\mZ^{(A)} - \mP \mZ^{(B)}\|_F^2 \,=\, \argmax_{\mP\in S_d} \; \langle \mP, \mZ^{(A)} (\mZ^{(B)})^\top\rangle_F \label{eq:lap},
\mP_\ell \,=\, \argmin_{\mP\in S_d} \; \sum_{i=1}^n \|\mZ_{:,i}^{(A)} - \mP \mZ_{:,i}^{(B)}\|^2 \,=\, \argmax_{\mP\in S_d} \; \langle \mP, \mZ^{(A)} (\mZ^{(B)})^\top\rangle_F \label{eq:lap},
\end{align}
where $\langle \mA, \mB \rangle_F = \sum_{i,j} A_{i,j} B_{i,j}$ denotes the Frobenius inner product between real-valued matrices $\mA$ and $\mB$. %and $\|\mA\|_F^2 = \langle\mA,\mA\rangle_F$ is the corresponding norm.
Conveniently, \cref{eq:lap} constitutes a ``linear assignment problem''  (LAP)~\citep{bertsekas1998network} for which efficient, practical algorithms are known. Having solved this assignment problem on each layer, we can then permute the weights of model $B$ to match model $A$ as closely as possible
$$
\mW'_\ell = \mP_\ell \mW_\ell^{(B)} \mP_{\ell-1}^\top, \quad \vb'_\ell = \mP_\ell \vb_\ell^{(B)}
$$
for each layer $\ell$, producing weights $\Theta'$ with activations that align as closely possible with $\Theta_A$.

Computationally, this entire process is relatively lightweight: the $\mZ^{(A)}$ and $\mZ^{(B)}$ matrices can be computed in a single pass over the training dataset, and, in practice, a full run through the training dataset may be unnecessary. 
% Decent correlation estimates are likely possible with only a subset of the data. 
Solving \cref{eq:lap} is possible due to well-established, polynomial-time algorithms for solving the linear assignment problem~\citep{DBLP:books/daglib/p/Kuhn10,DBLP:journals/computing/JonkerV87,DBLP:journals/taes/Crouse16}. Also, conveniently, the activation matching at each layer is independent of the matching at every other layer, resulting in a separable and straightforward optimization problem; this advantage will not be enjoyed by the following methods.

Dispensing with regression, one could similarly associate units by matching against a matrix of cross-correlation coefficients in place of $\mZ^{(A)} (\mZ^{(B)})^\top$. We observed correlation matching to work equally well but found OLS regression matching to be more principled and easier to implement. 

Activation matching has previously been studied for model merging in \citet{DBLP:conf/nips/TatroCDMSL20,DBLP:conf/nips/SinghJ20,DBLP:journals/corr/LiYCLH15} albeit not from the perspective of OLS regression.

\subsection{Matching Weights}
\label{sec:matching_weights}

Instead of associating units by their activations, we could alternatively inspect the weights of the model itself. Consider the first layer weights, $\mW_1$; each row of $\mW_1$ corresponds to a single feature. If two such rows were equal, they would compute exactly the same feature (ignoring bias terms for the time being). And, if $[\mW_1^{(A)}]_{i,:} \approx [\mW_1^{(B)}]_{j,:}$, it stands to reason that units $i$ and $j$ should be associated. 
% Selecting inner products\footnote{The choice of inner product is nearly inevitable. It is equivalent to L_2, since $\argmin_\pi \| \vec(\Theta_A) - \vec(\pi(\Theta_B))\|^2 = \argmin_\pi \|\vec(\Theta_A)\|^2 - 2 \vec(\Theta_A)\cdot\vec(\pi(\Theta_B)) + \|\vec(\Theta_B)\|^2 = \argmax_\pi \vec(\Theta_A) \cdot \vec(\pi(\Theta_B))$.} as our metric of similarity and extending this idea to every layer, we are then inspired to pursue the optimization,
Extending this idea to every layer, we are inspired to pursue the optimization
$$
\argmin_\pi \; \| \vec(\Theta_A) - \vec(\pi(\Theta_B)) \|^2 \,=\, \argmax_\pi \; \vec(\Theta_A) \cdot \vec(\pi(\Theta_B)).
$$
We can re-express this in terms of the full weights,
\begin{align}
\argmax_{\pi = \left\{ \mP_i \right\}} \; \langle \mW_1^{(A)},\, \mP_1 \mW_1^{(B)} \rangle_F 
% + \vb_1^{(A)} \cdot (\mP_1 \vb_1^{(B)}) 
+ \langle \mW_2^{(A)},\, \mP_2 \mW_2^{(B)} \mP_1^\top \rangle_F + \dots 
+ \langle \mW_L^{(A)},\, \mW_L^{(B)} \mP_{L-1}^\top \rangle_F,
\end{align}
resulting in another matching problem. We term this formulation the ``sum of bilinear assignments problem'' (SOBLAP). Unfortunately, this matching problem is thornier than the classic linear assignment matching problem presented in \cref{eq:lap}. Unlike LAP, we are interested in permuting \textit{both} the rows and columns of $\mW_\ell^{(B)}$ to match $\mW_\ell^{(A)}$, which fundamentally differs from permuting only rows or only columns. We formalize this difficulty as follows.
\begin{lemma}
\label{thm:soblap_is_np_hard}
The sum of a bilinear assignments problem (SOBLAP) is NP-hard and admits no polynomial-time constant-factor approximation scheme for $L>2$.
\end{lemma}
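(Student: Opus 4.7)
My plan is to exhibit SOBLAP as a strict generalization of the \emph{bilinear assignment problem} (BAP) -- given matrices $\mA, \mB$, maximize $\langle \mA, \mP \mB \mQ^\top\rangle_F$ over independent permutation matrices $\mP, \mQ$ of compatible sizes -- which is known in the combinatorial optimization literature to be NP-hard and to admit no polynomial-time constant-factor approximation. Once the reduction is in place, both parts of the lemma follow immediately.

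The reduction itself is essentially one line. Given a BAP instance $(\mA, \mB)$, I would build a SOBLAP instance with $L = 3$ by setting $\mW_2^{(A)} = \mA$, $\mW_2^{(B)} = \mB$, and $\mW_1^{(A)} = \mW_1^{(B)} = \mW_3^{(A)} = \mW_3^{(B)} = \mathbf{0}$. The first and third terms of the SOBLAP objective then vanish identically, leaving only the bilinear middle term $\langle \mA, \mP_2 \mB \mP_1^\top\rangle_F$, which is precisely the BAP objective under the identification $\mP \leftarrow \mP_2$, $\mQ \leftarrow \mP_1$. For general $L > 3$, the same construction is repeated at any interior layer $\ell \in \{2, \dots, L-1\}$: set $\mW_\ell^{(A)} = \mA$, $\mW_\ell^{(B)} = \mB$, and zero out every other weight matrix; the permutations $\mP_i$ for $i \notin \{\ell-1, \ell\}$ then drop out of the objective entirely and can be chosen arbitrarily. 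Because the reduction preserves objective values \emph{exactly}, it is simultaneously a Karp reduction (giving NP-hardness) and an approximation-preserving reduction (giving the constant-factor inapproximability claim).

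The main obstacle is justifying the invoked hardness results for BAP itself. The NP-hardness of BAP is the standard reduction from the Koopmans--Beckmann quadratic assignment problem, and the lack of a constant-factor approximation scheme is transferred from the well-known gap-preserving inapproximability of QAP under $\mathrm{P} \neq \mathrm{NP}$. If a self-contained derivation were required instead, I would directly reduce axial 3-dimensional assignment to the SOBLAP middle term by encoding its cost tensor through a block-structured choice of $\mW_2^{(A)}, \mW_2^{(B)}$ together with a large-weight gadget forcing the block structure on $\mP_1, \mP_2$, but leaning on the existing BAP literature is cleaner and avoids that technical detour.
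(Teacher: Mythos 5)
Your embedding step is exactly the one the paper uses (implicitly): zero out every layer except one interior layer $\ell$, so that the SOBLAP objective collapses to the single bilinear term $\langle \mA,\, \mP_{\ell} \mB \mP_{\ell-1}^\top\rangle_F$ with two \emph{independent} permutations, and this embedding preserves objective values exactly, so it is both a Karp reduction and approximation-preserving. The divergence is in what happens next. The paper does not treat the hardness of this two-permutation bilinear problem as a citable fact: its entire proof in \Cref{sec:soblap_is_np_hard_proof} is a from-scratch reduction of QAP to it. That reduction is not a one-liner, because QAP is a \emph{constrained} version of the bilinear problem (it imposes $\mP = \mQ$), not a special case of it, and relaxing a constraint does not preserve hardness. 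The paper's gadget is to set $\mA = \mC + \lambda\mI$ and $\mB = \mD - \lambda\mI$, so that the expanded objective contains the term $-\lambda^2\tr(\mP\mQ^\top)$, which for $\lambda > 5d\alpha$ dominates all cross terms and forces $\mP = \mQ$ at any optimum; the objective then collapses to the QAP objective plus constants.

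This forcing step is exactly what your proposal outsources to ``the BAP literature,'' and that is where the gap is. The cost tensor arising from SOBLAP's middle term is not a general (Lawler-form) BAP tensor but the decomposable Koopmans--Beckmann form $q_{(i,k),(j,l)} = A_{ij}B_{kl}$. NP-hardness of general BAP does not imply hardness of this structured special case --- special cases of hard problems can be easy; indeed SOBLAP itself with $L=2$ is a polynomially solvable LAP. For general BAP the reduction from QAP is easy precisely because one may subtract a large penalty directly on the entries $q_{(i,k),(j,k)}$ to reward $\mP = \mQ$; in the decomposable case you cannot modify individual tensor entries, and the penalty must be manufactured while preserving the product structure $A_{ij}B_{kl}$ --- which is precisely what the paper's diagonal-shift gadget accomplishes. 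So your appeal to ``the standard reduction from the Koopmans--Beckmann quadratic assignment problem'' either needs a specific theorem covering the two-independent-permutation decomposable form, or needs the gadget spelled out; as written, the central difficulty of the lemma is assumed rather than proved. Your fallback via axial three-dimensional assignment has the same problem: an arbitrary three-index cost tensor cannot be encoded in the product form $A_{ij}B_{kl}$ without a forcing gadget, and that gadget is again the missing content.
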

% We prove this in \Cref{sec:soblap_is_np_hard_proof}. 
\Cref{thm:soblap_is_np_hard} contrasts starkly with classical LAP, for which polynomial-time algorithms are known.

\begin{figure}
\begin{center}
\includegraphics[width=0.24\textwidth]{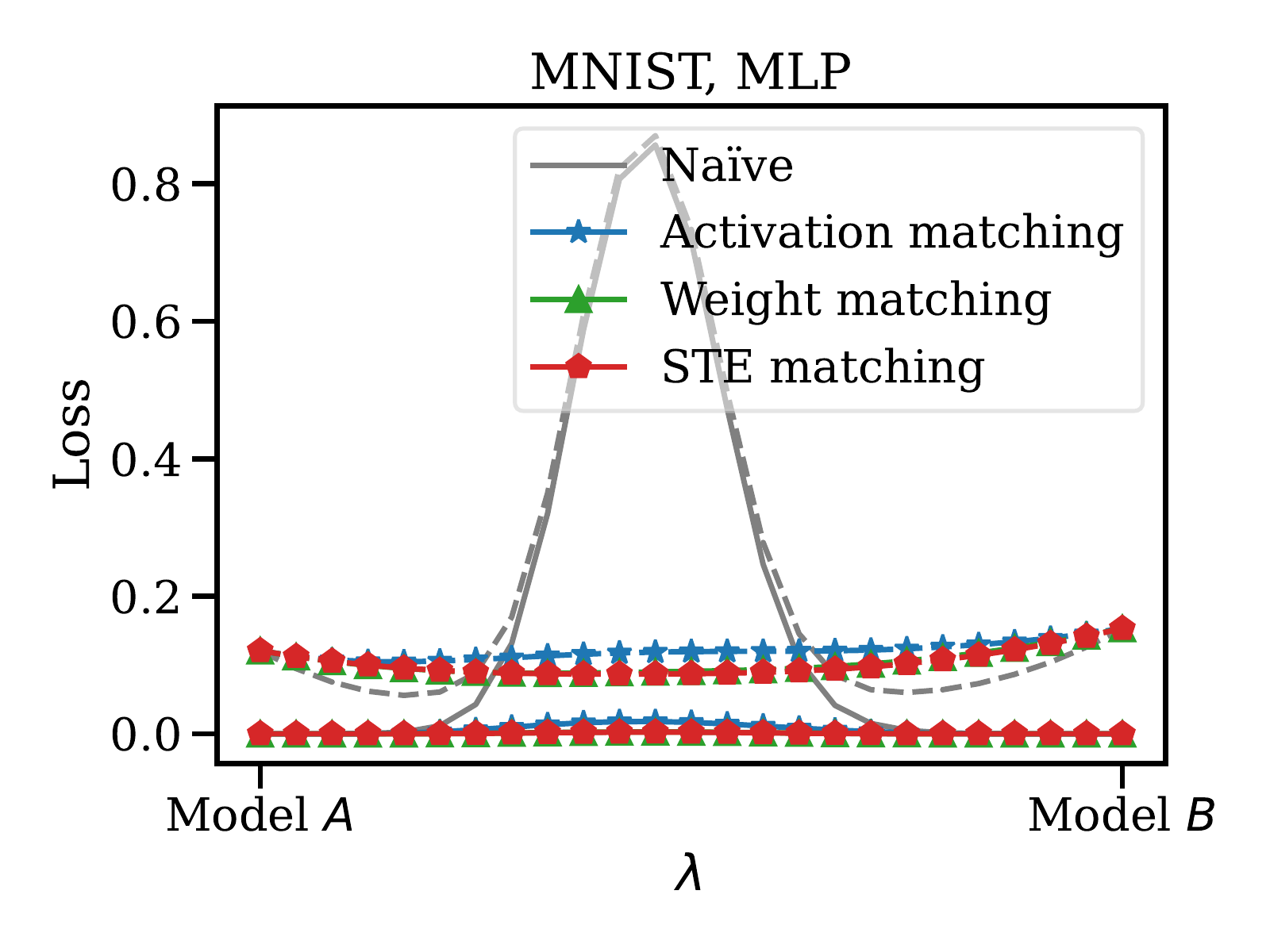}
\includegraphics[width=0.24\textwidth]{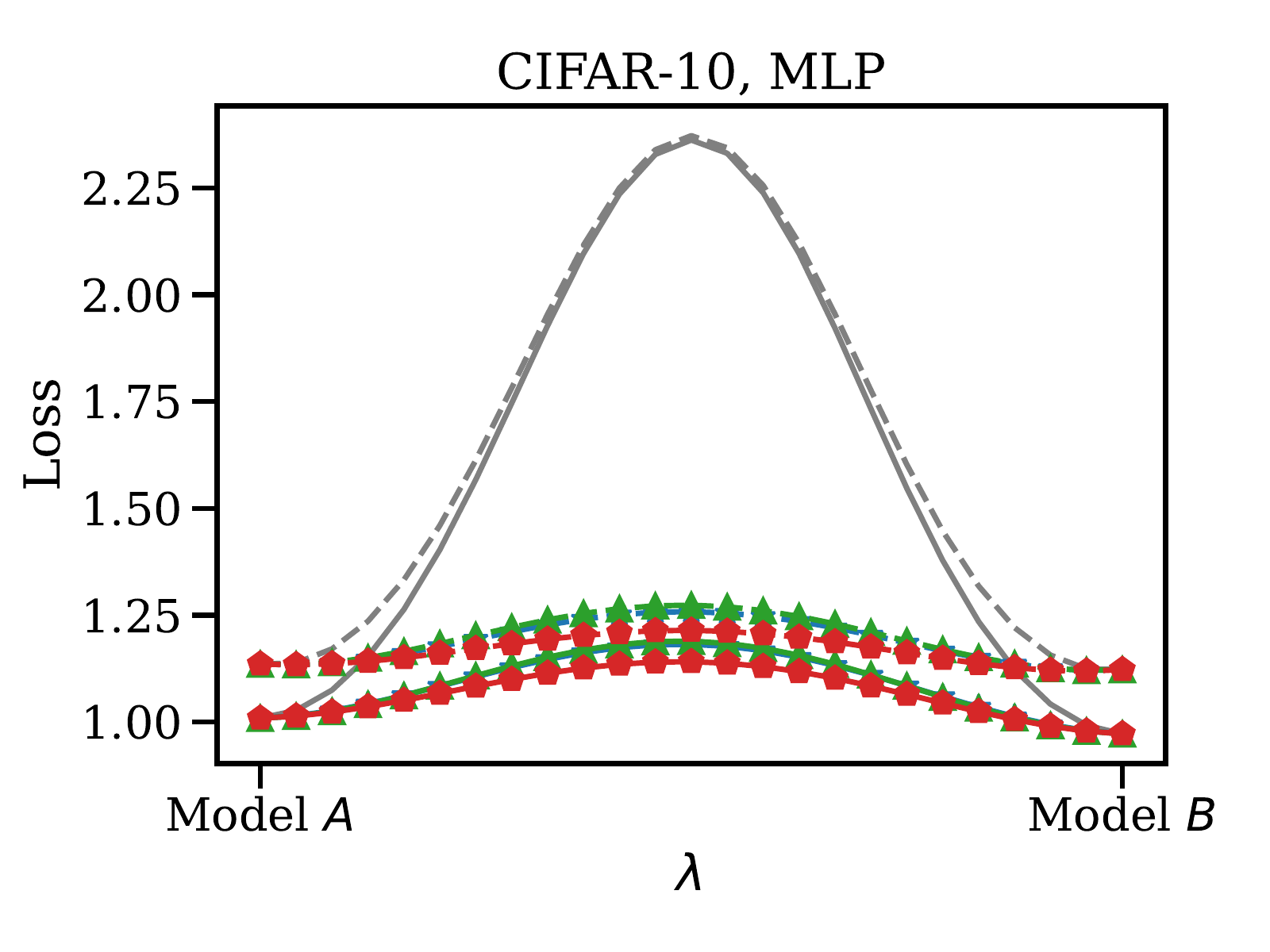}
\includegraphics[width=0.24\textwidth]{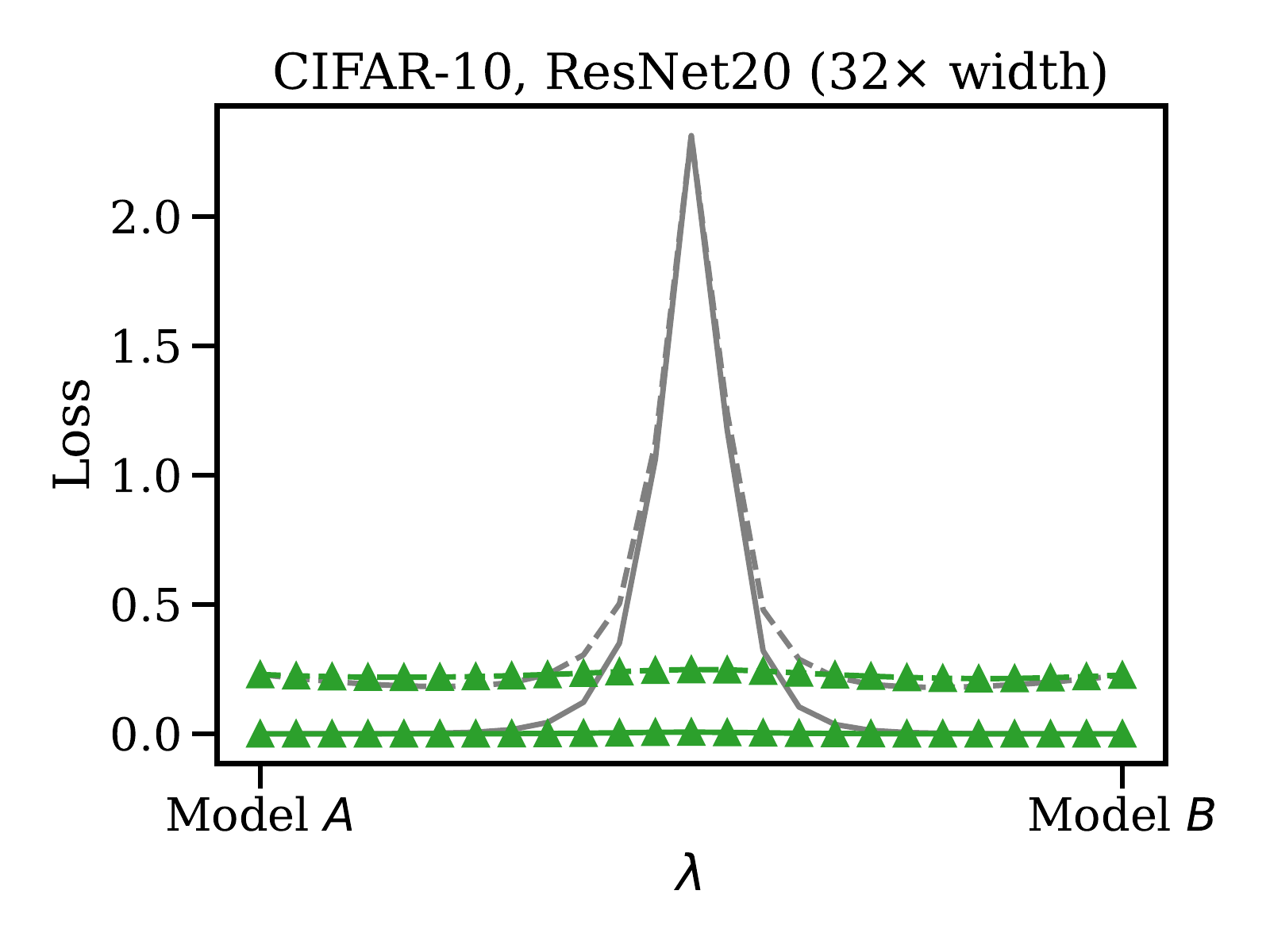}
\includegraphics[width=0.24\textwidth]{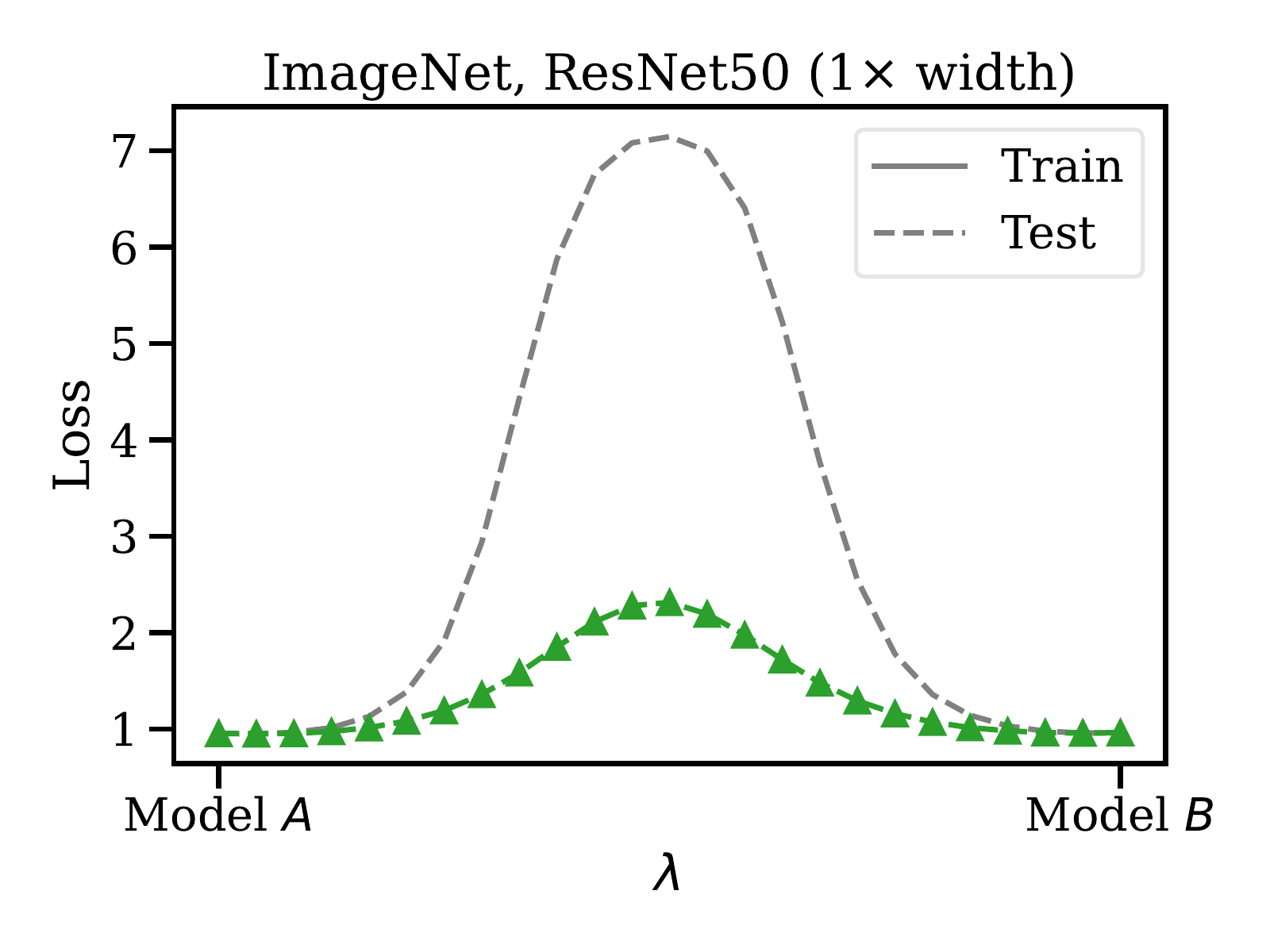}
\end{center}
\vspace{-5mm}
\caption{\textbf{Linear mode connectivity is possible after permuting.} Loss landscapes when interpolating between models trained on MNIST, CIFAR-10, and ImageNet. In all cases we can significantly improve over naïve interpolation. Straight-through estimator matching performs best but is very computationally expensive. Weight and activation matching perform similarly, although weight matching is orders of magnitude faster and does not rely on the input data distribution. We hypothesize that the ImageNet barrier could be reduced by increasing the model width as in \Cref{sec:effect_of_model_width}.}
\label{fig:loss_interp_plots}
\end{figure}

Undeterred, we propose a approximation algorithm for SOBLAP. Looking at a single $\Pl$ while holding the others fixed, we observe that the problem can be reduced to a classic LAP,
\begin{align*}
&\argmax_{\Pl} \; \langle \mW_\ell^{(A)}, \, \Pl \mW_\ell^{(B)} \mP_{\ell-1}^\top \rangle_F + \langle \mW_{\ell+1}^{(A)},\, \mP_{\ell+1} \mW_{\ell+1}^{(B)} \Pl^\top \rangle_F \\
% &\qquad\qquad = \argmax_{\Pl} \; \langle \Pl,\, \mW_\ell^{(A)} \mP_{\ell-1} (\mW_\ell^{(B)})^\top \rangle_F + \langle \Pl,\, (\mW_{\ell+1}^{(A)})^\top \mP_{\ell+1} \mW_{\ell+1}^{(B)} \rangle_F\\
&\qquad\qquad = \argmax_{\Pl} \; \langle \Pl,\, \mW_\ell^{(A)} \mP_{\ell-1} (\mW_\ell^{(B)})^\top + (\mW_{\ell+1}^{(A)})^\top \mP_{\ell+1} \mW_{\ell+1}^{(B)} \rangle_F.
\end{align*}
This leads to a convenient coordinate descent algorithm: go through each layer and greedily select its best $\mP_\ell$. Repeat until convergence. We present this in \Cref{alg:permutation_coordinate_descent}.

\begin{algorithm}
\SetAlgoLined
\DontPrintSemicolon
\vspace{1mm}
\SetKwInOut{Input}{Given}
\Input{Model weights $\Theta_A = \left\{\mW_1^{(A)}, \dots, \mW_L^{(A)} \right\}$ and $\Theta_B = \left\{\mW_1^{(B)}, \dots, \mW_L^{(B)} \right\}$}
\vspace{1mm}
\KwResult{A permutation $\pi = \left\{ \mP_1,\dots,\mP_{L-1} \right\}$ of $\Theta_B$ such that $\vec(\Theta_A) \cdot \vec(\pi(\Theta_B))$ is approximately maximized.}
\vspace{-1mm}
\hrulefill
\vspace{1mm}

\textbf{Initialize:} $\mP_1 \gets \mI, \dots, \mP_{L-1} \gets \mI$

\Repeat{convergence}{
% \textrm{random_permutation}
    \For{$\ell \in \textsc{RandomPermutation}(1,\dots,L - 1)$}{
        $\mP_\ell \gets \textsc{SolveLAP}\left(\mW_\ell^{(A)} \mP_{\ell-1} (\mW_\ell^{(B)})^\top + (\mW_{\ell+1}^{(A)})^\top \mP_{\ell+1} \mW_{\ell+1}^{(B)}\right)$
    }
}

\caption{\textsc{PermutationCoordinateDescent}}
\label{alg:permutation_coordinate_descent}
\end{algorithm}

Although we present \Cref{alg:permutation_coordinate_descent} in terms of an MLP without bias terms, in practice our implementation can handle the weights of models of nearly arbitrary architectures, including bias terms, residual connections, convolutional layers, attention mechanisms, and so forth. 
We propose an extension of \Cref{alg:permutation_coordinate_descent} to merging more than two models at a time in \Cref{sec:merging_many_models}.
% In addition,
% We did not observe any examples of it failing to converge. We defer a formal analysis of \Cref{alg:greedy_soblap} to future work.
\begin{lemma}
\label{thm:permutation_coordinate_descent_terminates}
\Cref{alg:permutation_coordinate_descent} terminates.
\end{lemma}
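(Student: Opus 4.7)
The plan is to exhibit a bounded, monotone potential function whose value can only take finitely many distinct values, so that any sequence of updates must stabilize in finitely many steps. The natural candidate is exactly the objective being optimized, namely
\[
F(\mP_1, \dots, \mP_{L-1}) \;=\; \sum_{\ell=1}^{L} \langle \mW_\ell^{(A)},\, \mP_\ell\, \mW_\ell^{(B)}\, \mP_{\ell-1}^\top \rangle_F,
\]
with the convention $\mP_0 = \mP_L = \mI$. The derivation in the paragraph preceding \Cref{alg:permutation_coordinate_descent} shows that, holding all $\mP_k$ with $k \ne \ell$ fixed, the inner update for $\mP_\ell$ is precisely the LAP whose objective is the $\ell$-th ``slice'' of $F$, while the remaining terms of $F$ do not depend on $\mP_\ell$.

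First I would establish that each inner iteration is \emph{non-decreasing} in $F$: since $\textsc{SolveLAP}$ returns a global maximizer of the single-coordinate subproblem, replacing the old $\mP_\ell$ by the new one can only weakly increase the $\ell$-th slice, and it leaves every other slice untouched. Next I would note that $F$ has finite range: the domain is the finite set $(S_d)^{L-1}$ of tuples of $d\times d$ permutation matrices, so $F$ attains only finitely many distinct values. A non-decreasing sequence taking values in a finite set must become constant after finitely many strict increases; hence there exists an iteration after which $F$ no longer changes.

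To upgrade ``the objective stops changing'' to ``the variables themselves stop changing'' (which is the natural meaning of ``convergence'' in the repeat-loop), I would fix a deterministic tie-breaking rule in $\textsc{SolveLAP}$ that prefers the incumbent $\mP_\ell$ whenever it already attains the optimum of its LAP. With this convention, once $F$ has stabilized every coordinate update returns the current $\mP_\ell$ unchanged, so a full pass produces no modifications and the termination test is satisfied.

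The one subtlety that will require care, rather than being a true obstacle, is the possibility of ties in $\textsc{SolveLAP}$: without a deterministic tie-breaking rule the algorithm could in principle cycle through distinct tuples of equal $F$-value. The argument above sidesteps this by adopting an incumbent-preserving convention; alternatively one can simply redefine convergence to mean ``a full pass leaves $F$ unchanged,'' in which case monotonicity plus the finite range of $F$ yields termination immediately, with no need to reason about the permutations themselves.
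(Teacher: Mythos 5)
Your proposal is correct and takes essentially the same route as the paper: the paper's proof also uses the objective $\rho(\pi) = \vec(\Theta_A)\cdot\vec(\pi(\Theta_B))$ as a monotone potential over the finite set of permutation tuples, merely phrased as a contradiction (no cycle can exist in the graph of improving single-coordinate updates) rather than as a stabilizing monotone sequence. If anything, your explicit handling of ties is more careful than the paper's argument, which asserts a strict increase $\rho(\pi_i) < \rho(\pi_j)$ whenever $\mP_\ell$ changes and thereby silently assumes the incumbent-preserving (or strictly-improving) tie-breaking convention that you spell out.
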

Our experiments showed this algorithm to be fast in terms of both iterations necessary for convergence and wall-clock time, generally on the order of seconds to a few minutes.

Unlike the activation matching method presented in \Cref{sec:matching_activations}, weight matching ignores the data distribution entirely. Ignoring the input data distribution and therefore the loss landscape handicaps weight matching but allows it to be much faster. We therefore anticipate its potential application in fields such as finetuning~\citep{DBLP:conf/naacl/DevlinCLT19,wortsman2022robust,wortsman2022model}, federated learning~\citep{DBLP:conf/aistats/McMahanMRHA17,DBLP:journals/corr/KonecnyMRR16,DBLP:journals/corr/KonecnyMYRSB16}, and model patching~\citep{merging_models_with_FWA,DBLP:conf/nips/SungNR21,RaffelBlog}. In practice, we found weight matching to be surprisingly competitive with data-aware methods. \Cref{sec:experiments} studies this trade-off.

\subsection{Learning Permutations with a Straight-through Estimator}
\label{sec:ste}

Inspired by the success of straight-through estimators (STEs) in other discrete optimization problems~\citep{DBLP:journals/corr/BengioLC13,DBLP:conf/nips/KusupatiWRSPPJK21,DBLP:conf/eccv/RastegariORF16,DBLP:journals/corr/CourbariauxB16}, we attempt here to ``learn'' the ideal permutation of weights $\pi(\Theta_B)$. Specifically, our goal is to optimize
\begin{equation}
\label{eq:ste}
\min_{\tilde{\Theta}_B} \; \mathcal{L}\left( \frac{1}{2} \left( \Theta_A + \textrm{proj}\left(\tilde{\Theta}_B\right) \right) \right), \quad\quad \textrm{proj}(\Theta) \,\triangleq\, \argmax_{\pi} \; \vec(\Theta) \cdot \vec(\pi(\Theta_B)),
\end{equation}
where $\tilde{\Theta}_B$ denotes an approximation of $\pi(\Theta_B)$, allowing us to implicitly optimize $\pi$. However, \cref{eq:ste} involves inconvenient non-differentiable projection operations, $\textrm{proj}(\cdot)$, complicating the optimization. We overcome this via a ``straight-through'' estimator: we parameterize the problem in terms of a set of weights $\tilde{\Theta}_B \approx \pi(\Theta_B)$. In the forward pass, we project $\tilde{\Theta}_B$ to the closest realizable $\pi(\Theta_B)$. In the backwards pass, we then switch back to the unrestricted weights $\tilde{\Theta}_B$. In this way, we are guaranteed to stay true to the projection constraints in evaluating the loss but can still compute usable gradients at our current parameters, $\tilde{\Theta}_B$.\footnote{Note again that projecting according to inner product distance is equivalent to projecting according to the $L_2$ distance when parameterizing the estimator based on the $B$ endpoint. We also experimented with learning the midpoint directly, $\tilde{\Theta}\approx\frac{1}{2}(\Theta_A + \pi(\Theta_B))$, in which case the $L_2$ and inner product projections diverge. In testing all possible variations, we found that optimizing the $B$ endpoint had a slight advantage, but all possible variations performed admirably.}

Conveniently, we can re-purpose \Cref{alg:permutation_coordinate_descent} to solve $\textrm{proj}(\tilde{\Theta}_B)$. Furthermore, we found that initializing $\tilde{\Theta}_B = \Theta_A$ performed better than random initialization. This is to be expected immediately at initialization since the initial matching will be equivalent to the weight matching method of \Cref{sec:matching_activations}. However, it is not immediately clear why these solutions continue to outperform a random initialization asymptotically.

Unlike the aforementioned methods, \Cref{alg:ste} attempts to explicitly ``learn'' the best permutation $\pi$ using a conventional training loop. By initializing to the weight matching solution of \Cref{sec:matching_weights} and leveraging the data distribution as in \Cref{sec:matching_activations}, it seeks to offer a best-of-both-worlds solution. However, this comes at a very steep computational cost relative to the other two methods.

\section{A Counterexample to Universal Linear Mode Connectivity} 
\label{sec:counterexample}

In this section we argue that common optimization algorithms, especially SGD and its relatives, are implicitly biased towards solutions admitting linear mode connectivity. In particular, we demonstrate -- by way of a counterexample -- that adversarial, non-SGD solutions exist in loss landscapes such that no permutation of units results in linear mode connectivity.
We present this counterexample in complete detail in \Cref{sec:counterexample_appendix}.

The existence of adversarial basins suggests that our ability to find LMC between independently trained models is thanks to inherent biases in optimization methods. 
We emphasize that this counterexample does not contradict  \Cref{def:conjecture}; 
rather, it illustrates the importance of the conjecture's restriction to SGD solutions~\citep{Entezari2021}.
% rather, it illustrates the importance of restricting the single-basin claim to SGD solutions~\citep{Entezari2021}. 
Characterizing the precise mechanism by which these solutions are biased towards LMC could be an exciting avenue for future work.

% Before arguing for the presence of linear mode connectivity, we step back to consider its limitations. In \Cref{sec:counterexample_appendix} we present a counterexample proving that there exist models that do not enjoy LMC under any permutation of weights.
% Since our two networks have small width, we can simply inspect all four possible permutations of the intermediate units. We visualize this in~\Cref{fig:sgd_is_special_detail}. Critically, no permutation results in LMC.
% Furthermore, this counterexample is small enough such that it can be ``embedded'' into larger networks.

% We present this counterexample in part to establish some basic intuition for \Cref{def:conjecture} but more importantly to highlight that any success interpolating between permuted networks is due to inherent bias in the optimization algorithms used and the types of basins they are attracted to, not the network architectures themselves. 
% To the extent we achieve LMC in practice, it follows that it is an artifact of our optimization methods, not our architectures.
% SGD is implicitly biased towards solutions admitting LMC. 
% \fi

We also note that there are invariances beyond permutation symmetries: It is possible to move features between layers, re-scale layers, and so forth. Prior works noted the feature/layer association~\citep{DBLP:conf/iclr/NguyenRK21} and re-scaling invariances~\citep{DBLP:conf/icml/AinsworthFLF18}. The importance of these other symmetries and their interplay with optimization algorithms remains unclear.

\begin{figure}
\vspace{-5mm}
\begin{center}
\includegraphics[width=0.41\textwidth]{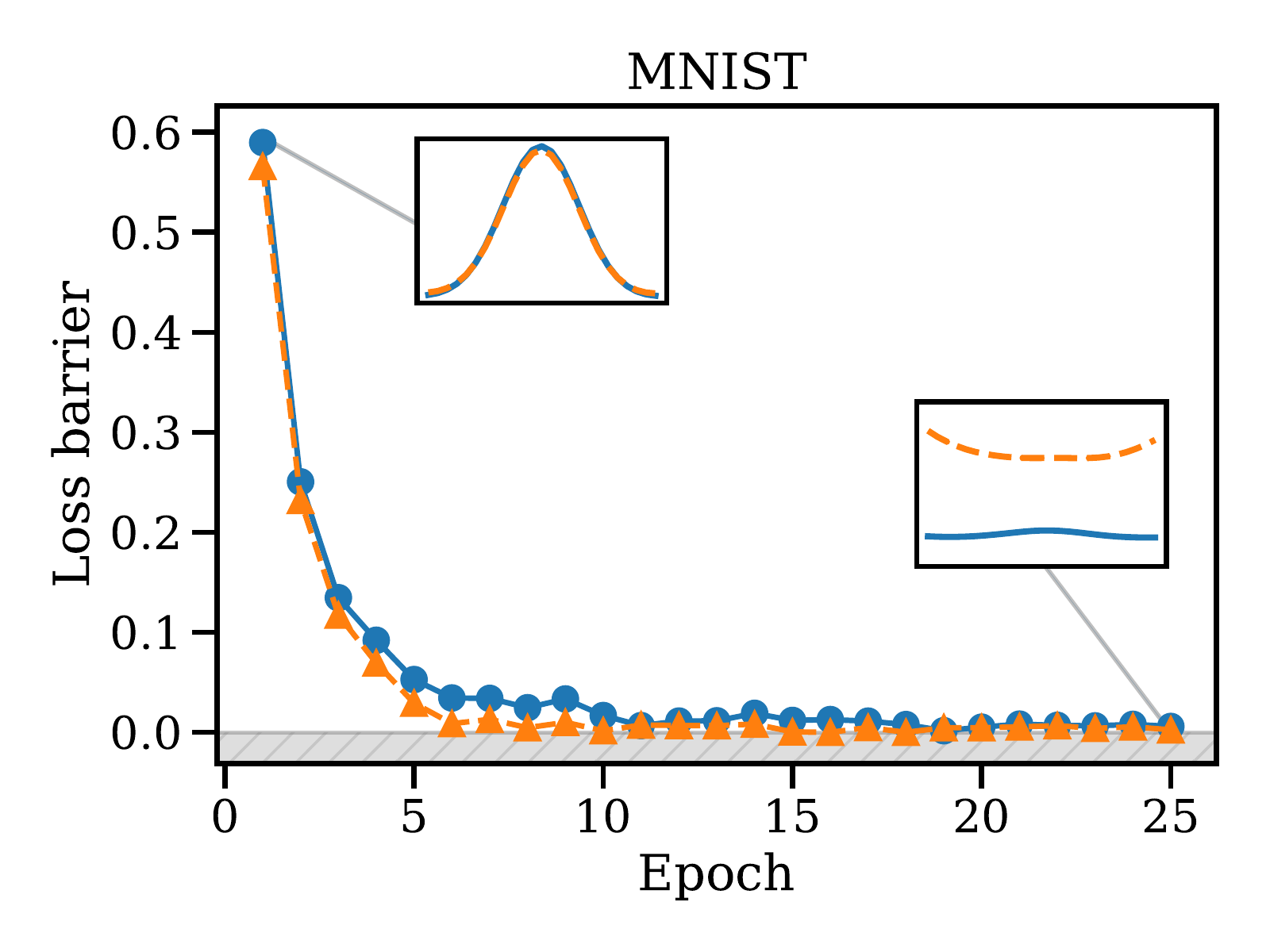}
\includegraphics[width=0.41\textwidth]{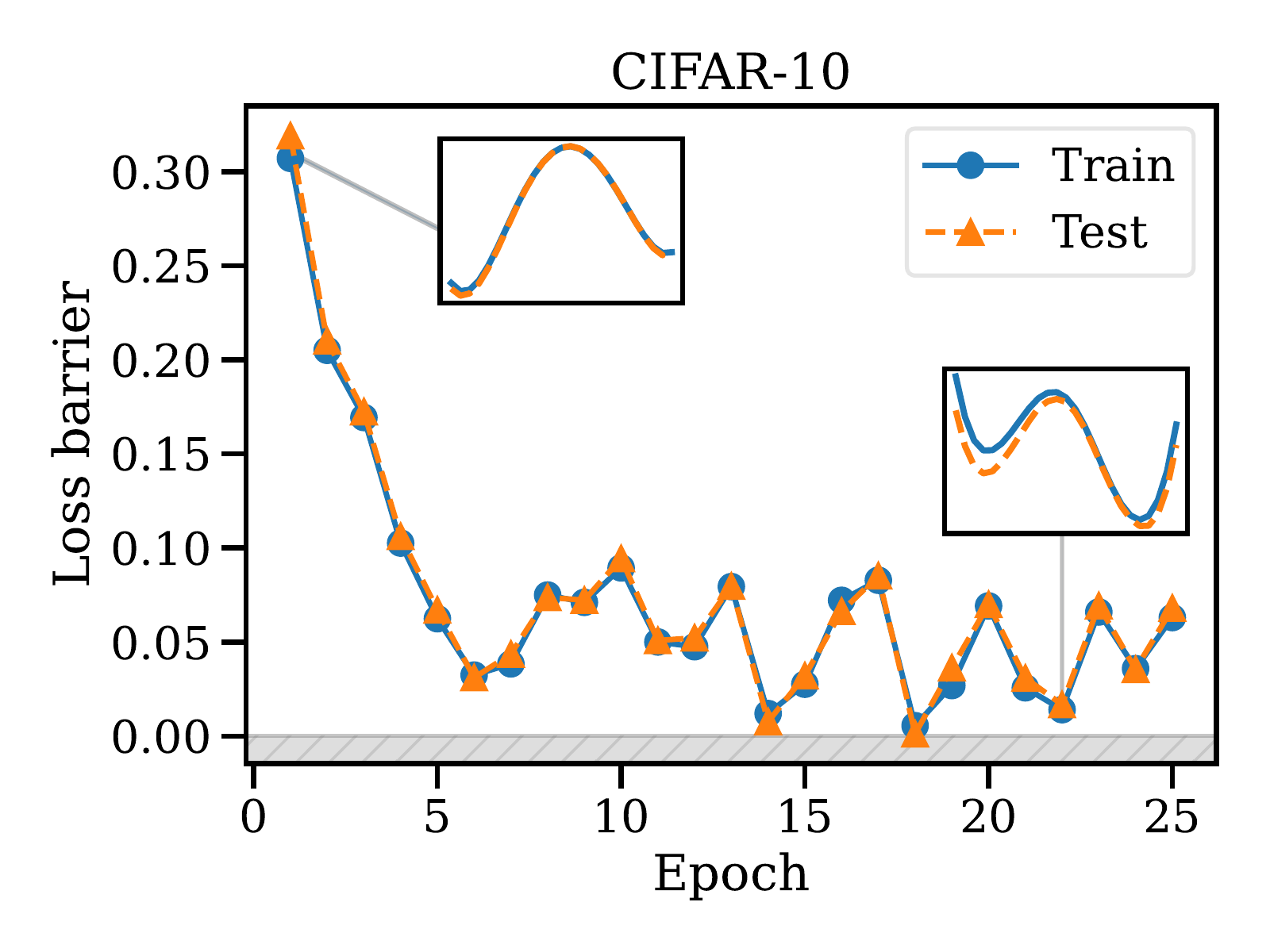}
\end{center}
\vspace{-5mm}
\caption{\textbf{Linear mode connectivity is challenging at initialization.} We show loss barriers per training time for MLPs trained on MNIST (left) and CIFAR-10 (right). Loss interpolation plots are inlaid to highlight results in initial and later epochs. LMC manifests gradually throughout training.
% The single basin theory does not hold for models at initialization, suggesting distinct loss landscape geometries at the beginning and end phases of training. 
We hypothesize that the variance in CIFAR-10 training is higher due to our MLP architecture being under-powered relative to the dataset. (Y-axis scales differ in each inlaid plot.)}
\label{fig:onset_of_lmc}
\end{figure}

\section{Experiments}
\label{sec:experiments}

Our base methodology is to separately train two models, $A$ and $B$, starting from different random initializations and with different random batch orders, resulting in trained weights $\Theta_A$ and $\Theta_B$, respectively. We then evaluate slices through the loss landscape, $\mathcal{L}((1-\lambda)\Theta_A+ \lambda\pi(\Theta_B))$ for $\lambda\in[0,1]$, where $\pi$ is selected according to the methods presented in \Cref{sec:methods}.\footnote{We also experimented with spherical linear interpolation (``slerp'') and found it to perform slightly better than linear interpolation in some cases; however, the difference was not sufficiently significant to warrant diverging from the pre-existing literature.} Ideally, we seek a completely flat or even convex one-dimensional slice. As discussed in \Cref{sec:background}, the ability to exhibit this behavior for arbitrary $\Theta_A,\Theta_B$ empirically suggests that the loss landscape contains only a single basin modulo permutation symmetries.

We remark that a failure to find a $\pi$ such that linear mode connectivity holds cannot rule out the existence of a satisfactory permutation. Given the astronomical number of permutation symmetries, \Cref{def:conjecture} is essentially impossible to disprove for any realistically wide model architecture.

\subsection{Loss Landscapes Before and After Matching}
\label{sec:before_and_after_matching}

We present results for models trained on MNIST~\citep{DBLP:journals/pieee/LeCunBBH98}, CIFAR-10~\citep{Krizhevsky09learningmultiple}, and ImageNet~\citep{DBLP:conf/cvpr/DengDSLL009} in \Cref{fig:loss_interp_plots}. Naïve interpolation~$\left(\pi(\Theta_B)=\Theta_B\right)$ substantially degrades performance when interpolating. On the other hand, the methods introduced in \Cref{sec:methods} can achieve much better barriers. We achieve zero-barrier linear mode connectivity on MNIST with all three methods, although activation matching performs just slightly less favorably than weight matching and straight-through estimator (STE) matching. We especially note that the test loss landscape becomes convex after applying our weight matching and STE permutations! In other words, our interpolation actually yields a merged model that outperforms both models $A$ and $B$. We elaborate on this phenomenon in \Cref{sec:model_merging_experiments} and \Cref{sec:merging_many_models}.

On ImageNet we fall short of zero-barrier connections, although we do see a 67\% decrease in barrier relative to naïve interpolation. 
% Here, the benefits of STE relative to weight matching become clearer: STE matching identifies a noticeably better permutation than the other two methods. 
% CIFAR-10 is not out-of-reach, however. 
As we demonstrate in \Cref{sec:effect_of_model_width}, we can achieve zero-barrier LMC on CIFAR-10 with large ResNet models. Therefore, we hypothesize that the presence of LMC depends on the model having sufficient capacity (esp. width) to capture the complexity of the input data distribution, and that ImageNet results could be improved by expanding model width.

STE matching, the most expensive method, produces the best solutions. Somewhat surprising, however, is that the gap between STE and the other two methods is relatively small. In particular, it is remarkable how well \Cref{alg:permutation_coordinate_descent} performs without access to the input data at all. We found that weight matching offered a compelling balance between computational cost and performance: It runs in mere seconds (on current hardware) and produces high-quality solutions.
% On the other hand, activation matching appears to be of little utility since it is more expensive to run and failed to produce higher quality solutions in our experiments.

\begin{figure}
\vspace{-5mm}
\begin{center}
\includegraphics[width=0.41\textwidth]{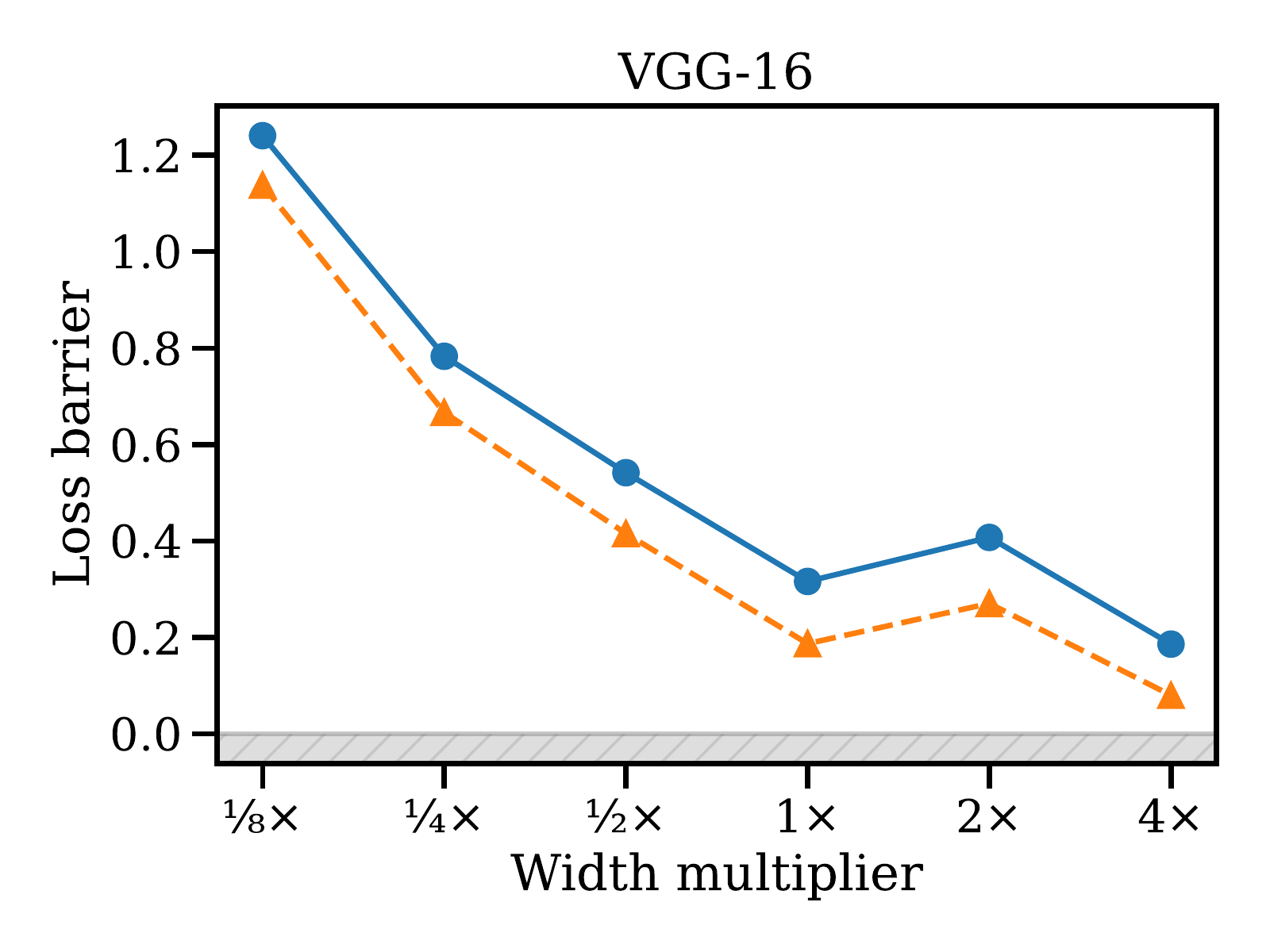}
\includegraphics[width=0.41\textwidth]{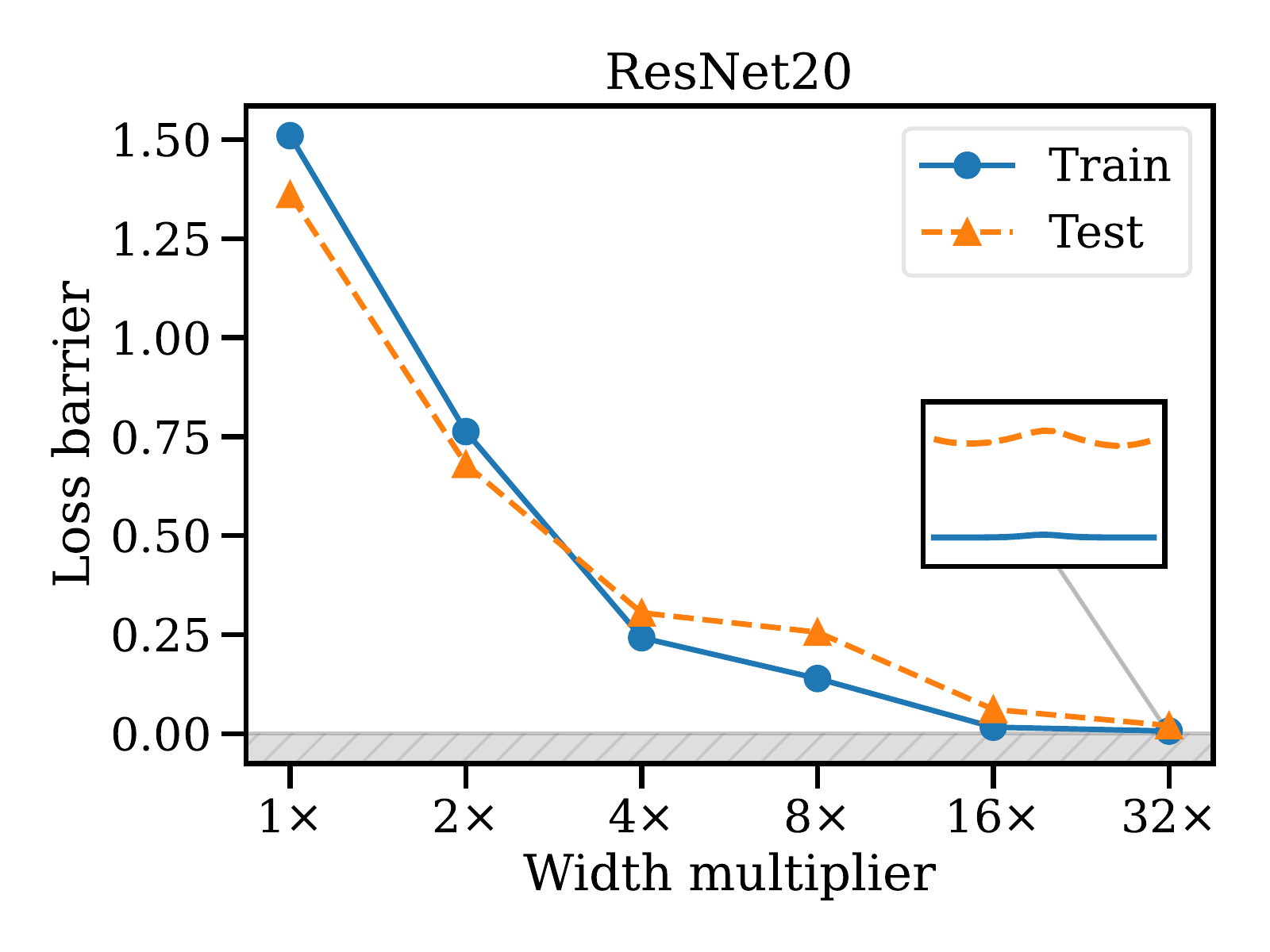}\end{center}
\vspace{-5mm}
\caption{\textbf{Wider models exhibit better linear mode connectivity.} Training convolutional and ResNet architectures on CIFAR-10, we ablate their width and visualize loss barriers after weight matching. Notably, we achieve zero-barrier linear mode connectivity between ResNet models, the first such demonstration.}
\label{fig:barrier_vs_width}
\end{figure}

\subsection{Onset of Mode Connectivity}
\label{sec:onset_of_lmc}

Given the results of \Cref{sec:before_and_after_matching}, it may be tempting to conclude that the entirety of weight space contains only a single basin modulo permutation symmetries. However, we found that linear mode connectivity is an emergent property of training, and we were unable to uncover it early in training. We explore the emergence of LMC in \Cref{fig:onset_of_lmc}.
% The emergence of LMC partway through training is consistent with previous works, which suggest that training consists of an initial ``burn-in'' phase followed by a much longer ``tuning'' phase~\citep{DBLP:conf/icml/FrankleD0C20}.
Concurrent to our work, \citet{https://doi.org/10.48550/arxiv.2209.07509} showed that LMC at initialization is possible using a permutation found at the end of training.

Note that the final inlaid interpolation plot in \Cref{fig:onset_of_lmc}(right) demonstrates an important shortcoming of the loss barrier metric, i.e., the interpolation includes points with lower loss than either of the two models. However, the loss barrier is still positive due to non-negativity, as mentioned in \Cref{sec:background}.

% We hypothesize that the art of ``model hacking'' is in fact the search for architectures such that their loss landscapes have connected basins. MLPs are generally thought to have enough model capacity to fit to MNIST but not CIFAR-10. And we correspondingly observe the differences in the resulting loss landscapes in \Cref{fig:barrier_vs_epoch}: MLP models on MNIST fall into easily connectable basins, whereas MLP models trained on CIFAR-10 seem to exhibit complex, multimodal behavior between basins. \sam{This is all out of date now that we have CIFAR10 experiments.}

\subsection{Effect of Model Width}
\label{sec:effect_of_model_width}

Conventional wisdom maintains that wider architectures are easier to optimize~\citep{DBLP:conf/nips/JacotHG18,DBLP:conf/nips/LeeXSBNSP19}. We now investigate whether they are also easier to linearly mode connect. We train VGG-16~\citep{DBLP:journals/corr/SimonyanZ14a} and ResNet20~\citep{DBLP:conf/cvpr/HeZRS16} architectures of varying widths on the CIFAR-10 dataset. Results are presented in \Cref{fig:barrier_vs_width}.\footnote{Unfortunately, $8\times$ width VGG-16 training was unattainable since it exhausted GPU memory on available hardware at the time of writing.}

A clear relationship emerges between model width and linear mode connectivity, as measured by the loss barrier between solutions. Although $1\times$-sized models did not seem to exhibit linear mode connectivity, we found that larger width models decreased loss barriers all the way to zero. In \Cref{fig:barrier_vs_width}(right), we show what is to our knowledge the premiere demonstration of zero-barrier linear mode connectivity between two large ResNet models trained on a non-trivial dataset.

We highlight that relatively thin models do not seem to obey linear mode connectivity yet still exhibit similarities in training dynamics. This suggests that either our permutation selection methods are failing to find satisfactory permutations on thinner models or that some form of invariance other than permutation symmetries must be at play in the thin model regime.

\subsection{Model Patching, Split Data Training, and Improved Calibration}
\label{sec:model_merging_experiments}

\begin{wrapfigure}{r}{0.5\textwidth}
\vspace{-7.5mm}
\begin{center}
\includegraphics[width=0.45\textwidth]{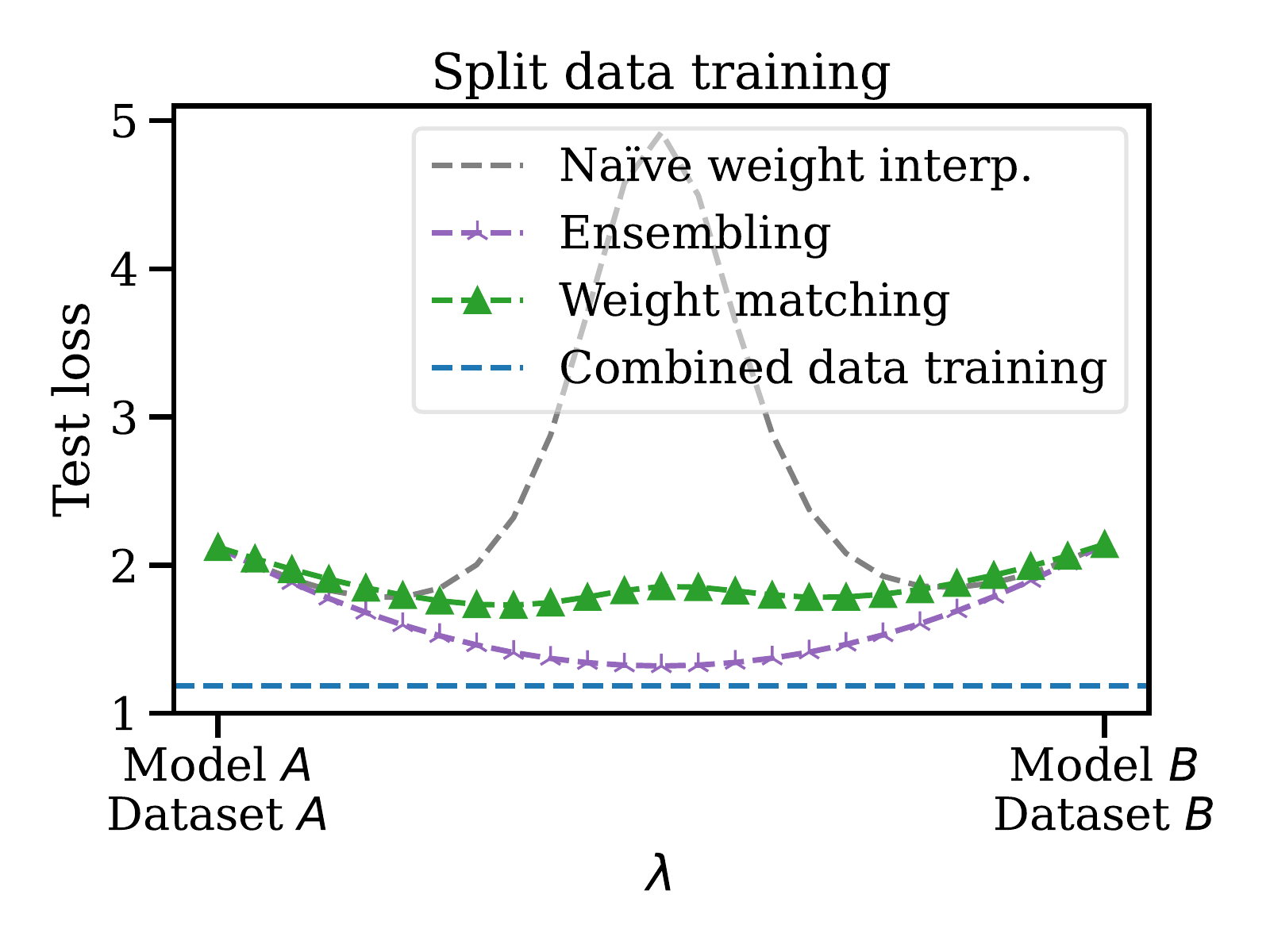}
\end{center}
\vspace{-5mm}
\caption{\textbf{Models trained on disjoint datasets can be merged constructively.} \Cref{alg:permutation_coordinate_descent} makes it possible for two ResNet models trained on disjoint, biased subsets of CIFAR-100 to be merged in weight space such that their combination outperforms both input models in terms of test loss on the combined dataset.}
\label{fig:split_data_model_merging}
\end{wrapfigure}

Inspired by work on finetuning~\citep{wortsman2022model}, model patching~\citep{DBLP:conf/nips/SinghJ20,RaffelBlog}, and federated learning~\citep{DBLP:conf/aistats/McMahanMRHA17,DBLP:journals/corr/KonecnyMRR16,DBLP:journals/corr/KonecnyMYRSB16}, we study whether it is possible to synergistically merge the weights of two models trained on disjoint datasets. Consider, for example, an organization with multiple (possibly biased) datasets separated for regulatory (e.g., GDPR) or privacy (e.g., on-device data) considerations. Models can be trained on each dataset individually, but training in aggregate is not feasible. Can we combine separately trained models so that the merged model performs well on the entirety of the data?

To address this question, we split the CIFAR-100 dataset~\citep{Krizhevsky09learningmultiple} into two disjoint subsets: dataset $A$, containing 20\% examples labelled 0-49 and 80\% labelled 50-99, and dataset $B$, vice versa. 
% No image exists in both datasets $A$ and $B$. 
ResNet20 models $A$ and $B$ were trained on their corresponding datasets.
Privacy requirements mandate that we utilize a data-agnostic algorithm like \Cref{alg:permutation_coordinate_descent}. 
\Cref{fig:split_data_model_merging} shows the result of merging the two models with weight matching. 
% Test loss is evaluated on the complete held-out test set.
For comparison, we benchmark naïve weight interpolation, ensembling of the model logits, and full-data training.
% Note that privacy requirements mandate that we utilize an algorithm agnostic to the data distribution.

As expected, merging separately trained models did not match the performance of an omniscient model trained on the full dataset or an ensemble of the two models with twice the number of effective weights. On the other hand, we did manage to merge the two models in weight space, achieving an interpolated model that outperforms both input models in terms of test loss while using half the memory and compute required for ensembling. 
Furthermore, the merged model's probability estimates are better calibrated than either of the input models as demonstrated in \Cref{fig:cifar100_split_calibration}.
Accuracy results are presented in \Cref{fig:cifar100_split_test_accuracy}.
\Cref{alg:permutation_coordinate_descent} also vastly outperformed naïve interpolation, the status quo for model combination in federated learning and distributed training. 
% Incorporating \Cref{alg:greedy_soblap} into a federated learning framework could offer an exciting avenue for future work. 
% We note that \citet{wortsman2022model} suggests that weight averaging performs comparably to ensembling in the fine-tuning regime, lending support for using \Cref{alg:greedy_soblap} in place of ensembling.

\section{Related Work}
\label{sec:related-work}
%\sandy{Note that Related Work sections not only list other publications; they should highlight relevant parts of that work and differentiate your work from it. You do it in some cases, but not as much as I'd like to see.}
% https://docs.google.com/document/d/1clH66t3bgXrBY-itVWWpDWhPCDzA52ssdeOc3Q4qLWM/edit

\textbf{(Linear) mode connectivity.} 
% \cite{DBLP:conf/iclr/FreemanB17} introduced the concept of mode connectivity, i.e., that SGD solutions in the loss landscape are connected by constant-loss curves in weight space. 
% Further explorations were undertaken in \cite{DBLP:conf/nips/GaripovIPVW18,DBLP:conf/icml/DraxlerVSH18},  among others. 
\cite{DBLP:conf/nips/GaripovIPVW18,DBLP:conf/icml/DraxlerVSH18,DBLP:conf/iclr/FreemanB17} showed that different solutions in the neural network loss landscape could be connected by paths of near-constant loss, which \cite{DBLP:conf/nips/GaripovIPVW18} coined ``mode connectivity.''
\cite{DBLP:conf/nips/TatroCDMSL20} explored non-linear mode connectivity modulo permutation symmetries.
\cite{DBLP:conf/icml/FrankleD0C20} demonstrated a connection between \textit{linear} mode connectivity and the lottery ticket hypothesis. 
\citet{DBLP:journals/corr/abs-2205-12411} demonstrated that LMC does not always hold, even when fine-tuning. % "Linear Connectivity Reveals Generalization Strategies"
\citet{HechtNielsen1990ONTA,DBLP:journals/neco/ChenLH93} noted the existence of permutation symmetries, and \citet{DBLP:journals/corr/abs-1907-02911} implicated them as a source of saddle points in the loss landscape. 
% Inspired by the success of linear mode connectivity, \cite{Entezari2021} conjectured that SGD solutions could be linear mode connected modulo permutation symmetries.
Recently, the prescient work of \cite{Entezari2021} conjectured that SGD solutions could be linear mode connected modulo permutation symmetries and offered experiments buttressing this claim. % prescient?
Unlike previous works on LMC we accomplish zero-barrier paths between two independently-trained models with an algorithm that runs on the order of seconds.

\textbf{Loss landscapes and training dynamics.} 
\citet{DBLP:journals/corr/LiYCLH15,DBLP:conf/nips/YosinskiCBL14} investigated whether independently trained networks learn similar features, and to what extent they transfer. % convergent learning and "How transferable are features in deep neural networks?"
\citet{DBLP:journals/corr/abs-2106-13799} argued that independently trained networks meaningfully differ in the features they learn in certain scenarios.
\citet{DBLP:journals/corr/abs-1902-01996} studied the relative importance of layers. % "Are All Layers Created Equal?"
% \citet{DBLP:conf/nips/Li0TSG18} showed that residual connections result in smoother loss landscapes. % "Visualizing the Loss Landscape of Neural Nets"
\citet{DBLP:conf/icml/BentonMLW21} argued that SGD solutions form a connected volume of low loss. % "Loss Surface Simplexes for Mode Connecting Volumes and Fast Ensembling"
\citet{DBLP:conf/icml/PittorinoFPFBZ22} proposed a toroidal topology of solutions and a set of algorithms for symmetry removal. % "Deep Networks on Toroids"
On the theoretical front, \citet{DBLP:conf/nips/Kawaguchi16} proved that deep linear networks contain no local minima. %In other words, every local minima is a global minima. % "Deep Learning without Poor Local Minima"
\citet{DBLP:journals/corr/abs-2206-00939, DBLP:conf/nips/ChizatB18,DBLP:journals/corr/abs-1804-06561} characterized the training dynamics of one-hidden layer networks, proving that they converge to zero loss.
\citet{DBLP:journals/corr/abs-2205-14258,DBLP:conf/icml/SimsekGJSHGB21} investigated the algebraic structure of symmetries in neural networks and how this structure manifests in loss landscape geometry. % "On the Symmetries of Deep Learning Models and their Internal Representations", "Geometry of the Loss Landscape in Overparameterized Neural Networks: Symmetries and Invariances"

\textbf{Federated learning and model merging.}
\citet{DBLP:conf/aistats/McMahanMRHA17,DBLP:journals/corr/KonecnyMRR16,DBLP:journals/corr/KonecnyMYRSB16} introduced the concept of ``federated learning,'' i.e., learning split across across multiple devices and datasets. 
\citet{DBLP:conf/iclr/WangYSPK20} proposed an exciting federated learning method in which model averaging is done after permuting units. 
%\sandy{The following perfectly explains what I mean in my preceding note.}
Unlike this work, they merged smaller ``child'' models into a larger ``main'' model, and did so with a layer-wise algorithm that does not support residual connections or normalization layers. % "Federated Learning with Matched Averaging"
\citet{RaffelBlog,merging_models_with_FWA,DBLP:conf/nips/SungNR21} conceptualized the study of ``model patching,'' i.e., the idea that models should be easy to modify and submit changes to. % "A Call to Build Models Like We Build Open-Source Software", "Merging Models with Fisher-Weighted Averaging", "Training Neural Networks with Fixed Sparse Masks"
\citet{DBLP:journals/corr/abs-2208-05592} investigated model patching for the fine-tuning of open-vocabulary vision models. % "Patching open-vocabulary models by interpolating weights"
\citet{DBLP:conf/ijcnn/AshmoreG15} first explored the use of matching algorithms for the alignment of network units. % A method for finding similarity between multi-layer perceptrons by Forward Bipartite Alignment
\citet{DBLP:conf/nips/SinghJ20} proposed merging models by soft-aligning associations weights, inspired by optimal transport. % "Model Fusion via Optimal Transport"
\citet{DBLP:conf/icml/LiuLWXSY22,DBLP:conf/gecco/UriotI20} further explored merging models taking possible permutations into account. % "Optimizing Mode Connectivity via Neuron Alignment", "Deep Neural Network Fusion"
% \citet{DBLP:conf/gecco/UriotI20} studied permutation symmetry resolution for neuroevolution.
\citet{wortsman2022model} demonstrated state-of-the-art ImageNet performance by averaging the weights of many fine-tuned models.

\section{Discussion and Future Work}
We explore the role of permutation symmetries in the linear mode connectivity of SGD solutions.
We present three algorithms to canonicalize independent neural network weights in order to make the loss landscape between them as flat as possible. 
In contrast to prior work, we linearly mode connect large ResNet models with no barrier in seconds to minutes.
Despite presenting successes across multiple architectures and datasets, linear mode connectivity between thin models remains elusive. % despite thin models also exhibiting phenomena suggesting some similar invariances in their training dynamics.
Therefore, we conjecture that permutation symmetries are a necessary piece, though not a complete picture, of the fundamental invariances at play in neural network training dynamics.
In particular, we hypothesize that linear, possibly non-permutation, relationships connect the layer-wise activations between models trained by SGD.
In the infinite width limit, there exist satisfactory linear relationships that are also permutations.

An expanded theory and empirical exploration of other invariances -- such as cross-layer scaling or general linear relationships between activations -- presents an intriguing avenue for future work. 
Ultimately, we anticipate that a lucid understanding of loss landscape geometry will not only advance the theory of deep learning but will also promote the development of better optimization, federated learning, and ensembling techniques.

% Broader Impact Statement and Acknowledgements disabled for double-blind reviewing

\clearpage

\subsubsection*{Ethics Statement}
Merging models raises interesting ethical and technical questions about the resulting models. Do they inherit the same biases as their input models? Are rare examples forgotten when merging? Is it possible to gerrymander a subset of the dataset by splitting its elements across many shards?

Deployment of any form of model merging ought to be paired with thorough auditing of the resulting model, investigating in particular whether the merged model is representative of the entirety of the data distribution.

\subsubsection*{Reproducibility Statement}
\ifanonymous

Our code is open sourced at \url{https://github.com/REDACTED/REDACTED}. Our experimental logs and downloadable model checkpoints are fully open source at \url{https://wandb.ai/REDACTED/REDACTED}.

\else

Our code is open sourced at \url{https://github.com/samuela/git-re-basin}. Our experimental logs and downloadable model checkpoints are fully open source at \url{https://wandb.ai/skainswo/git-re-basin}.

\fi

% \subsubsection*{Author Contributions}
% If you'd like to, you may include  a section for author contributions as is done
% in many journals. This is optional and at the discretion of the authors.

\ifanonymous\else
\subsubsection*{Acknowledgments}
We are grateful to Vivek Ramanujan, Mitchell Wortsman, Aditya Kusupati, Rahim Entezari, Jason Yosinski, Krishna Pillutla, Ofir Press, Matt Wallingford, Tim Dettmers, Raghav Somani, Gabriel Ilharco, Ludwig Schmidt, Sewoong Oh, and Kevin Jamieson for enlightening discussions. 
% Thank you to Keller Jordan and Behnam Neyshabur for suggesting the re-calculation of BatchNorm statistics after merging models, which we utilized in the ImageNet experiments. We look forward to issuing an update citing their work upon its publication.
Thank you to John Thickstun and Sandy Kaplan for their thoughtful review of an earlier draft of this work and to Ofir Press and Tim Dettmers for their potent advice on framing and communicating this work. 
This work was (partially) funded by the National Science Foundation NRI (\#2132848) \& CHS (\#2007011), DARPA RACER, the Office of Naval Research, Honda Research Institute, and Amazon. This work is supported in part by Microsoft and NSF grants DMS-2134012 and CCF-2019844 as a part of NSF Institute for Foundations of Machine Learning (IFML).
\fi

\clearpage

\bibliography{iclr2023_conference}
\bibliographystyle{iclr2023_conference}

\clearpage

\appendix
\section{Appendix}

\subsection{Known Failure Modes}
\label{sec:known_failure_modes}

We emphasize that none of the techniques presented in this paper are silver bullets. Here we list the failure cases that the authors are presently aware of,
\begin{enumerate}
\item Models of insufficient width
\item Models in the initial stages of training
\item VGGs on MNIST
\item MNIST MLPs trained with SGD and too low of a learning rate, or Adam and too high of a learning rate
\item ConvNeXt architectures~\citep{DBLP:conf/cvpr/0003MWFDX22}, which have surprisingly few permutation symmetries due to extensive use of depth-wise convolutions
\end{enumerate}
Furthermore, we believe other failure modes certainly exist but have yet to be discovered. 

We are excited by the prospect of future work investigating these failure modes and improving our understanding of when and why model merging modulo permutation symmetries is feasible.

\subsection{Extended Related Work}
\textbf{Non-linear mode connectivity.} A flourishing set of literature exists studying non-linear mode connectivity, including but not limited to \citet{DBLP:conf/nips/GaripovIPVW18,DBLP:conf/icml/DraxlerVSH18,DBLP:conf/nips/KuditipudiWLZLH19}. This insightful line of work is inspirational to our own, however we take a strictly linear approach to mode connectivity as in \citet{DBLP:conf/icml/FrankleD0C20,DBLP:journals/corr/abs-2205-12411}. Restricting ourselves to linear trajectories comes with the advantage of having direct implications for a single-basin theory. However, it comes at the cost of a more challenging, discrete optimization problem. In particular, we found that -- in contrast to non-linear mode connectivity -- linear mode connectivity becomes drastically harder with smaller width models. 
Note additionally that most pre-existing mode connectivity work does not account for permutation symmetries of weight space, a linchpin element of our work. A notable exception to this trend can be found in \citet{DBLP:conf/nips/TatroCDMSL20}.

To summarize:
\citet{DBLP:conf/iclr/FreemanB17} introduced the notion of mode connectivity and proved that loss landscapes for single-hidden layer ReLU models contain only a single basin in the infinite width limit.
\citet{DBLP:conf/nips/GaripovIPVW18} and \citet{DBLP:conf/icml/DraxlerVSH18} concurrently demonstrated that simple zero-barrier curves can be learned to connect the optima in weight space, thus reshaping our understanding of practical loss landscape geometries.
\citet{DBLP:conf/nips/KuditipudiWLZLH19} proposes a theoretical explanation for the mode connectivity phenomenon.
\citet{DBLP:conf/icml/BentonMLW21} extends mode connectivity from one-dimensional paths to entire manifolds of low-loss, and show that these manifolds can be leveraged for state-of-the-art Bayesian ensembling of models.

\textbf{Relationship with \citet{DBLP:conf/nips/TatroCDMSL20}.} The impact of permutation symmetries on the non-linear mode connectivity of models is considered in \citet{DBLP:conf/nips/TatroCDMSL20}. In particular, they independently propose an algorithm more-or-less equivalent to \Cref{sec:matching_activations} but use it in conjunction with learned non-linear mode connecting curves. In contrast, we show that linear mode connectivity can be achieved without the need for learning non-linear paths between the aligned weights. Our derivation of \Cref{sec:matching_activations} from the principle of least-squares regression is novel, to the best of our knowledge.

\textbf{Relationship with \citet{DBLP:conf/nips/SinghJ20}.} \citet{DBLP:conf/nips/SinghJ20} studies model merging with ``soft matchings'' between units from the perspective of optimal transport. We emphasize the following commonalities/differences with their work:
\begin{itemize}
\item We focus on linear mode connectivity modulo permutation symmetries and its implications for a single-basin theory. On the other hand, \citet{DBLP:conf/nips/SinghJ20} emphasizes “soft” (ie., non-permutation) matching of units via optimal transport.
\item The activation matching method of \citet{DBLP:conf/nips/SinghJ20} reduces to that of \Cref{sec:matching_activations} when the optimal transport regularization term is set to zero and the unit ``importance'' values are set to uniform across all units on all layers.
\item Our weight matching and straight-through estimator methods solve for an alignment across all layers jointly. In contrast, \citet{DBLP:conf/nips/SinghJ20} executes greedy, single-pass matching looking only at weight information from the immediately previous layer when selecting permutations. \citet{DBLP:conf/nips/SinghJ20} suggests jointly solving for alignments as an avenue for future work.
\item The ``wts'' method of \citet{DBLP:conf/nips/SinghJ20} is not run on models including bias terms, skip connections, or normalization layers. In contrast, our \Cref{alg:permutation_coordinate_descent} works with models of nearly arbitrary architecture.
\item Our weight matching method (\Cref{alg:permutation_coordinate_descent}) outperforms the ``wts'' method of \citet{DBLP:conf/nips/SinghJ20}. See \Cref{sec:failures_of_greedy_uni-directional_matching} for more information.
\item \citet{DBLP:conf/nips/SinghJ20} introduces a method for merging multiple models simultaneously, but only demonstrates results on at most 8 models at a time and performs continued training after merging. In contrast, our \Cref{alg:merge-many} has been shown to work with as many as 32 models at a time and does not require continued training after merging. Our analysis of the calibration of the resulting merged models has no parallel in \citet{DBLP:conf/nips/SinghJ20}.
\end{itemize}

\textbf{Relationship with \citet{Entezari2021}.} \citet{Entezari2021} introduces the single-basin conjecture and provides the following evidence towards it: 
\begin{itemize}
\item \citet{Entezari2021} provides a statistical test which fails to detect a difference in barrier statistics between independently trained models and random permutations of the same model (Fig. 5 of \citet{Entezari2021}). Our work provides stronger support for the conjecture in that we give methods that can directly ``unscramble'' these permutations, proving that LMC can be found (\Cref{fig:split_data_model_merging,fig:barrier_vs_width}).

\citet{Entezari2021}'s experimental protocol does not provide evidence for linear mode connectivity. Rather, their experimental results suggest that barriers resulting from independent training look like the barriers resulting from random permutations. But this result is consistent with a world in which all solutions have barriers between them -- both between members of the same permutation equivalence class and between solutions in separate equivalence classes! In other words, there may still exist multiple equivalence classes of solutions. In contrast, we provide concrete evidence for a single-basin theory by developing algorithms that directly place independent solutions into the same basin~(\Cref{fig:loss_contour}). %See, eg., Fig. 1 which visualizes the effect of our method on an actual test loss landscape.

\item Although \citet{Entezari2021}'s conjecture is an important intellectual ancestor to our work, their demonstration of linear mode connectivity is limited to a single hidden-layer MLP on MNIST (Fig. 2 of \citet{Entezari2021}). However, this result for single hidden-layer MLP models is preceded by \citet{DBLP:conf/iclr/FreemanB17,DBLP:conf/gecco/UriotI20}. 
On the other hand, we focus on larger models and datasets that are more closely aligned with models used in practice at the time of writing. 
% The focus of our work is on models large enough to be practically relevant.

\item \citet{Entezari2021} proposes a simulated annealing algorithm that yields modest reductions in barrier between independently trained models, yet requires multiple days to run.

On the other hand, our weight matching algorithm (\Cref{alg:permutation_coordinate_descent}) completely removes barriers between models for more challenging models and datasets (\Cref{fig:split_data_model_merging,fig:loss_interp_plots}), and runs in seconds (Appendix A.5). Moreover, our weight matching method does not require access to the training data, enabling its potential application in domains like federated learning and distributed training.
\end{itemize}
In short, the work of \citet{Entezari2021} first proposed the ``single-basin'' conjecture. Our work is the first (to the best of our knowledge) to demonstrate that linear mode connectivity can be achieved between large models independently trained on challenging datasets.

\textbf{Differentiating through permutations.} 
Akin to differentiable permutation learning, many prior works have studied differentiable sorting~\citep{DBLP:conf/iclr/GroverWZE19,DBLP:conf/icml/PrilloE20,DBLP:journals/corr/abs-1905-11885,DBLP:journals/corr/abs-2203-09630,DBLP:journals/corr/abs-2105-04019,DBLP:conf/iclr/MenaBLS18}. % "Stochastic Optimization of Sorting Networks via Continuous Relaxations", "SoftSort: {A} Continuous Relaxation for the argsort Operator", "Differentiable Ranks and Sorting using Optimal Transport", "Monotonic Differentiable Sorting Networks", "Differentiable Sorting Networks for Scalable Sorting and Ranking Supervision", "Learning Latent Permutations with Gumbel-Sinkhorn Networks"
\citet{DBLP:conf/icml/BlondelTBD20} studied differentiable sorting and ranking with asymptotics that correspond to their non-differentiable versions. % "Fast Differentiable Sorting and Ranking"
\citet{DBLP:journals/siammax/FogelJBd15} explored recovering the linear orderings of items based on pairwise information, another form of permutation optimization. % "Convex Relaxations for Permutation Problems"
\citet{DBLP:journals/corr/BengioLC13} introduced the straight-through estimator for differentiating through discrete projections that we utilize in \Cref{sec:ste}.

\subsection{Experimental Details}

\subsubsection{Multi-layer Perceptron models}
In these experiments we utilized networks with 3 hidden layers of 512 units each. ReLU activations were used between layers and no normalization was performed. Optimization was done with Adam and a learning rate of $1e-3$.

\subsubsection{VGG-16 and ResNet models on CIFAR datasets}
We utilized the VGG-16 architecture of \citet{DBLP:journals/corr/SimonyanZ14a} with the exception that we used LayerNorm normalization in place of BatchNorm. Similarly we used the ResNet20 architecture of \citet{DBLP:conf/cvpr/HeZRS16} but with LayerNorms in place of BatchNorms.

The following data augmentation was performed during training
\begin{itemize}
\item Random resizes of the image between 0.8$\times$-1.2$\times$
\item Random 32$\times$32 pixel crops
\item Random horizontal flips
\item Random rotations between $\pm 30\degree$
\end{itemize}

Optimization was done with SGD with momentum (momentum set to 0.9). A weight decay regularization term of $5e-4$ was applied. A single cosine decay schedule with linear warm-up was used. Learning rates were initialized at $1e-6$ and linearly increased to $1e-1$ over the span of an epoch. After that point a single cosine decay schedule \citep{DBLP:conf/iclr/LoshchilovH17} was used for the remainder of training.

\subsubsection{ResNet50 models on ImageNet-1K}
In this experiment we utilized pre-trained ResNet50 model available for download online, and one trained ourselves. These were standard ResNet50 models, including the use of BatchNorm. In line with prior work \citep{DBLP:conf/uai/IzmailovPGVW18,DBLP:conf/icml/WortsmanHGFR21,DBLP:conf/nips/MaddoxIGVW19,DBLP:conf/iclr/WangSLOD21}, we recalculate BatchNorm statistics after performing weight interpolation. After our initial publication, the recalculation of BatchNorm statistics was suggested to us by the authors of \citet{DBLP:journals/corr/abs-2211-08403}. 
% Therefore, the ImageNet experiments presented in 

% \textbf{A note of chronology.} 
% Our work was concurrent to that of \citet{DBLP:journals/corr/abs-2211-08403}. However \Cref{sec:before_and_after_matching} ought to be viewed as following the work of \citet{DBLP:journals/corr/abs-2211-08403}.

\subsection{The Relationship between Permutation Matching and Normalization Layers}
In this section we discuss the impact that different types of common normalization layers can have on the feasibility of model merging.
\begin{itemize}
\item \textbf{BatchNorm}~\citep{DBLP:conf/icml/IoffeS15} generally breaks after interpolating between weights due to the so-called ``variance collapse'' problem~\citep{DBLP:journals/corr/abs-2211-08403}. Therefore, we recommend the recalculation of batch statistics after merging models~\citep{DBLP:conf/uai/IzmailovPGVW18,DBLP:conf/icml/WortsmanHGFR21,DBLP:conf/nips/MaddoxIGVW19,DBLP:conf/iclr/WangSLOD21}.
\item \textbf{LayerNorm}~\citep{DBLP:journals/corr/BaKH16} is invariant to permutations of units and we found that architectures with LayerNorm can be merged without issue.
\item \textbf{InstanceNorm}~\citep{DBLP:journals/corr/UlyanovVL16} also places no restrictions on unit order, and in principle does not present any issues, although we have not run any experiments with it.
\item \textbf{GroupNorm}~\citep{DBLP:journals/ijcv/WuH20} relies on unit indexes to organize units into groups, and therefore is not invariant to permutations of units. In principle, permutation alignment methods would not work on architectures with GroupNorm, though we have not tested this.
\end{itemize}

\subsection{Additional Information on \Cref{alg:permutation_coordinate_descent}}
On currently available hardware (p3.2xlarge AWS instance with an NVIDIA V100 GPU), we observed the following timing results with \Cref{alg:permutation_coordinate_descent},
\begin{enumerate}
\item MLP (3 layers, 512 units each): 3 seconds
\item ResNet50 ($1\times$ width): 33 seconds
\item ResNet20 ($32\times$ width): 194 seconds
\end{enumerate}

In addition, we tested the ability of \Cref{alg:permutation_coordinate_descent} to recover a known, randomly selected permutation. In a handful of experiments we found that \Cref{alg:permutation_coordinate_descent} was able to exactly recover the known, random permutation in just 3-4 of passes over the layers.

\subsection{Counterexample Details}
\label{sec:counterexample_appendix}

\begin{figure}
\begin{center}
\includegraphics[height=1.5in]{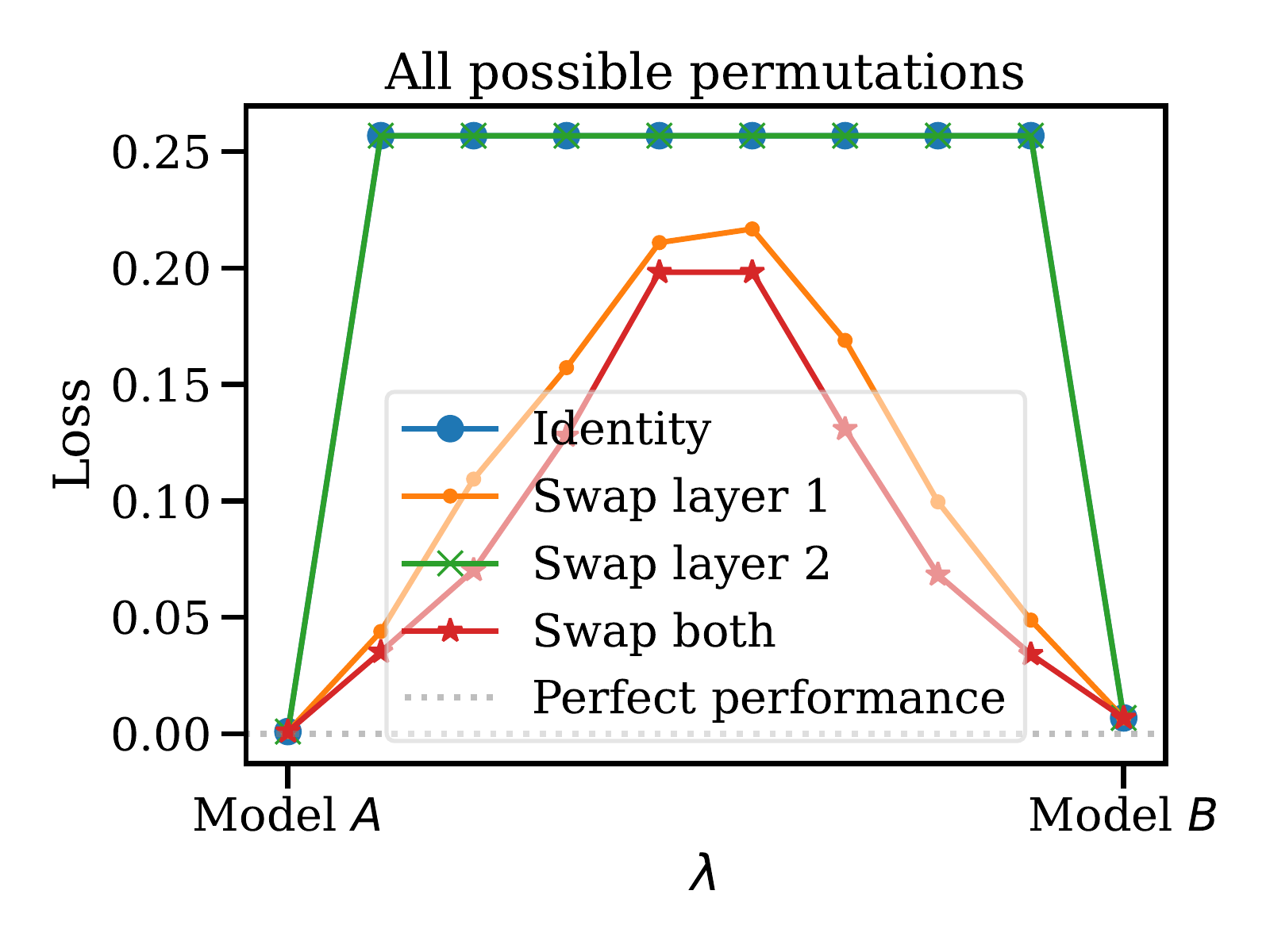}
\includegraphics[height=1.5in]{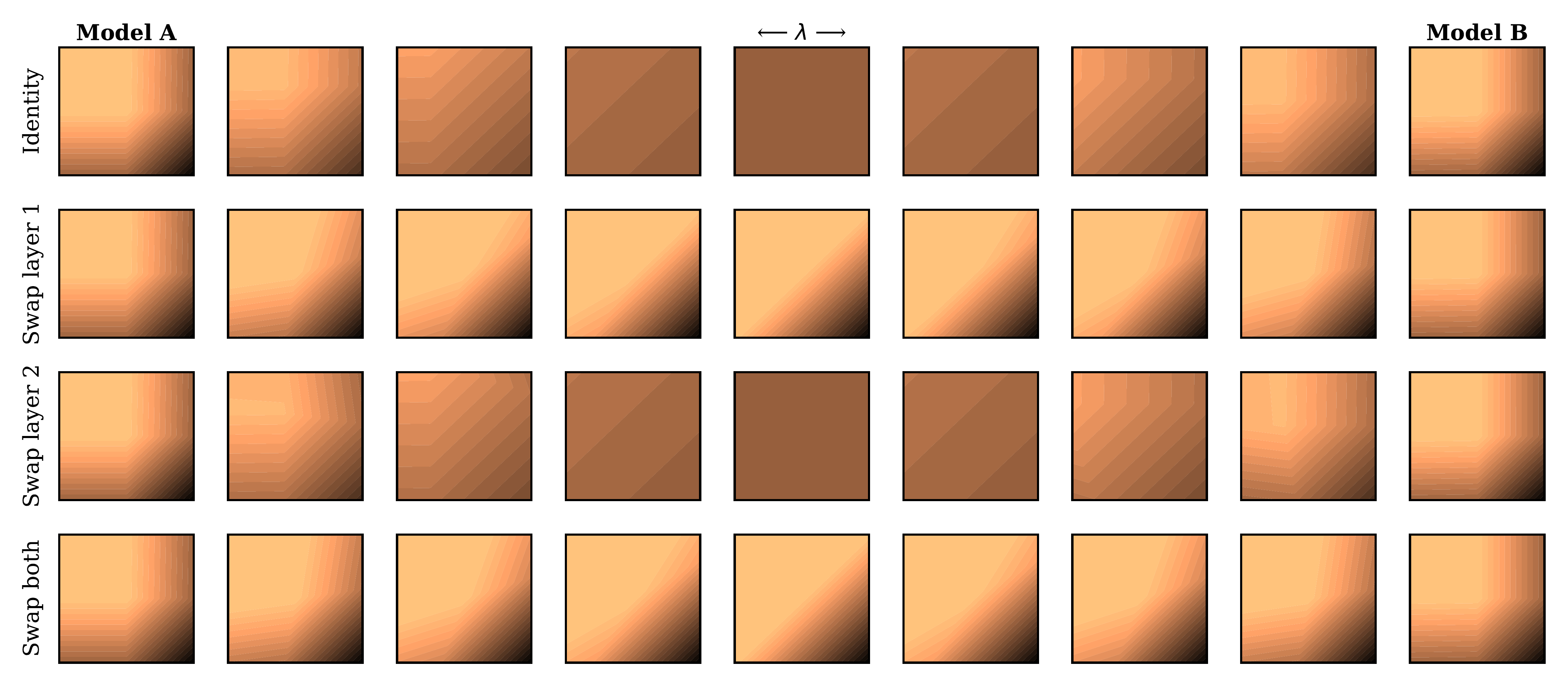}
\end{center}
\vspace{-5mm}
\caption{\textbf{A counterexample to universal LMC.} There exist models such that no possible permutation of weights allows for linear mode connectivity. \textit{Left:} performance of all possible linear interpolations between the two models. \textit{Right:} A visualization of the prediction functions $f(\vx)$ through each linear sweep. Each row corresponds to one of the four possible permutations, and each column corresponds to a value of $\lambda$, the linear interpolant. The existence of such cases suggests that linear mode connectivity is an artifact of SGD.}
\label{fig:sgd_is_special_detail}
\end{figure}

Consider a simple 2-dimensional classification task. Our data points are drawn $\vx \sim \text{Uniform}([-1,1]^2)$ and $y = \1_{x_1<0 \text{ and } x_2>0}$. 
% In other words, points in the positive quadrant are labeled $y=1$, while everything else is labeled $y=0$. 
\Cref{fig:sgd_is_special_data} provides a visualization of a sample of such data.

\begin{figure}
\begin{center}
\includegraphics[width=0.5\textwidth]{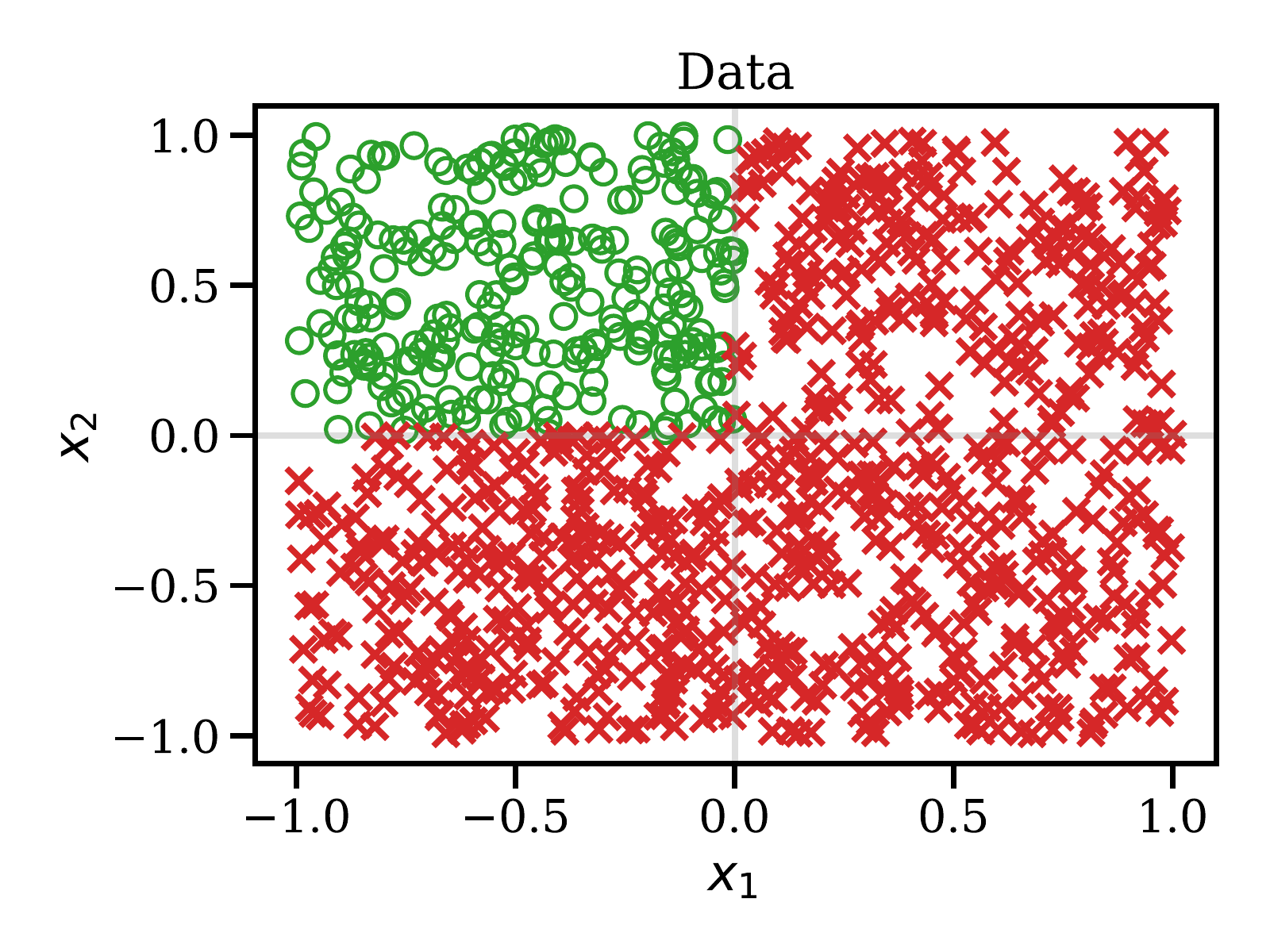}
\end{center}
\caption{The counterexample classification problem data.}
\label{fig:sgd_is_special_data}
\end{figure}

We utilize an MLP architecture consisting of two hidden layers, with two units each, and ReLU nonlinearities. Consider two weight assignments that both achieve a perfect fit to the data:
\begin{align}
f_A(\vx) &= \begin{bmatrix}
-1 & -1
\end{bmatrix} \sigma \left(\begin{bmatrix}
-1 & 0\\
0 & 1
\end{bmatrix} \sigma\left(\begin{bmatrix}
-1 & 0\\
0 & -1
\end{bmatrix} \vx + \begin{bmatrix}
1\\
0
\end{bmatrix}\right) + \begin{bmatrix}
1\\
0
\end{bmatrix}\right) \\
f_B(\vx) &= \begin{bmatrix}
-1 & -1
\end{bmatrix} \sigma \left(\begin{bmatrix}
1 & 0\\
0 & -1
\end{bmatrix} \sigma\left(\begin{bmatrix}
1 & 0\\
0 & 1
\end{bmatrix} \vx + \begin{bmatrix}
0\\
1
\end{bmatrix}\right) + \begin{bmatrix}
0\\
1
\end{bmatrix}\right).
\end{align}
% $$
% f(\vx; \Theta) = \mW_3 \sigma (\mW_2 \sigma(\mW_1 \vx + \vb_1) + \vb_2) + \vb_3
% $$
We predict a positive label when $f(\vx) \geq0$ and a negative label otherwise. 

Intuitively, these networks are organized such that each layer makes a classification whether $\vx_1 < 0$ or $\vx_2 > 0$. In model $A$, the first layer tests whether $\vx_2 > 0$, and the second layer tests whether $\vx_1 < 0$, whereas in model $B$ the order is reversed. With a bit of algebra, it is possible to see that both $f_A$ and $f_B$ achieve perfect performance. However, no possible permutation of units results in linear mode connectivity between $f_A$ and $f_B$. We visualize all possible permutations in \Cref{fig:sgd_is_special_detail}.

We claim that this example, and the underlying trick, are simple enough to be embedded into larger models. For example, this could trivially be extended to ResNets where different subsets of layers could be set to identity functions.

As discussed in \Cref{sec:counterexample}, the existence of adversarial basins in the loss landscape has interesting consequences for our understanding of loss landscape geometry. In particular, we argue that this implies that common optimization algorithms are conveniently biased towards solutions that admit LMC. However, the precise connection between optimization algorithms and linear mode connectivity remains unclear.

This counterexample does not constitute a contradiction of the conjecture in \citet{Entezari2021}. To be more precise, Conjecture 1 of \citet{Entezari2021} proposes that there exists some subset, $\mathcal{S}$, of parameter space such that every pair of elements in $\mathcal{S}$ can be linearly mode connected (after some permutation of units), and that with high probability SGD solutions are contained in $\mathcal{S}$. The example presented in this section does not contradict \citet{Entezari2021}'s conjecture, but instead illustrates that the restriction to SGD solutions is a ``load-bearing'' element of the conjecture.

\subsection{On the Failures of Greedy Uni-directional Matching}
\label{sec:failures_of_greedy_uni-directional_matching}
In contrast to prior work~\citep{DBLP:conf/icml/PittorinoFPFBZ22,DBLP:conf/nips/SinghJ20,DBLP:conf/iclr/WangYSPK20}, we eschew greedy uni-directional, single-pass matching between models. Instead we derive a weight matching algorithm from a principled, yet computationally infeasible optimization problem. In contrast to prior work, our method can be viewed as ``bi-directional'': it selects unit associations based on weights in all relevant layers, not just in the immediately previous layer. In this section, we describe benefits of our holistic approach, including an example problem showing the failure modes of greedy uni-directional matching. We find that matching across all layers simultaneously allows our weight matching algorithm (\Cref{alg:permutation_coordinate_descent}) to exploit units' relationships with downstream weights in a way that greedy uni-directional matching cannot.

Concretely, greedy uni-directional matching begins at the first layer and computes a matching, $\mP_1$, considering only $\mW_1^{(A)}, \mW_1^{(B)}$. After matching, $\mP_1$ is applied throughout the remainder of the network. Then, we proceed through the layers in order repeating this process.

We compare our \Cref{alg:permutation_coordinate_descent} to greedy uni-directional matching experimentally in \Cref{sec:failures_of_greedy_uni-directional_matching/experiments} and present a theoretical counterexample that illustrates the advantages of our method in \Cref{sec:failures_of_greedy_uni-directional_matching/example}.

\subsubsection{Experimental Comparison}
\label{sec:failures_of_greedy_uni-directional_matching/experiments}
In order to further evaluate our performance relative to prior work, and \citet{DBLP:conf/icml/PittorinoFPFBZ22,DBLP:conf/nips/SinghJ20} in particular, we explored experimental comparisons with VGG11 models trained on CIFAR-10 and on ResNet50 models trained on ImageNet.

\textbf{VGG11 models trained on CIFAR-10.} Following an exact reproduction of experiment from Table 1 of \citet{DBLP:conf/nips/SinghJ20}, we merged the model weights released along with their paper. We present results in \Cref{tab:otfusion_vgg11_comparison}. We found that our weight matching method outperforms the ``wts'' method of \citet{DBLP:conf/nips/SinghJ20} in both implementation speed and model performance when reproducing their experiment with their trained model weights.

\begin{table}[]
\begin{center}
\begin{tabular}{@{}lll@{}}
\toprule
\textbf{Method}        & \textbf{Test accuracy ($\uparrow$)} & \textbf{Run-time ($\downarrow$)} \\ \midrule
OT-Fusion~\citep{DBLP:conf/nips/SinghJ20}             & 85.98\%                & 2.86s             \\
Weight matching (ours) & \textbf{86.57\%}       & \textbf{0.64s}    \\ \bottomrule
\end{tabular}
\end{center}
\caption{\textbf{VGG11/CIFAR-10 performance relative to \citet{DBLP:conf/nips/SinghJ20}.} We found that our weight matching method outperforms the ``wts'' method of \citet{DBLP:conf/nips/SinghJ20} in both implementation speed and model performance when reproducing one of their experiments with their published model weights. Our implementation is 4.5$\times$ faster, and produces a solution with better model performance.}
\label{tab:otfusion_vgg11_comparison}
\end{table}

\textbf{ResNet50 models trained on ImageNet.}
We applied OT-Fusion (``wts'') to the ImageNet experiment that we consider in \Cref{sec:before_and_after_matching}. We present results in \Cref{fig:otfusion_imagenet_comparison}. We found the OT-Fusion method resulted in models with 1.38\% top-1 accuracy on ImageNet, only marginally improving over naïve averaging. On the other hand, we achieve 51.01\%.

BatchNorm statistics were recalculated after interpolation for all methods shown.

Although a number of factors may be responsible for the difference in performance between weight matching and OT-Fusion, we found the number of alignment passes made over the network layers to have a substantial impact. OT-Fusion is inherently limited to a single pass over the layers. On the other hand, we are not limited to any specific number of optimization passes and instead continue until convergence (convergence is guaranteed by \Cref{thm:permutation_coordinate_descent_terminates}). For comparison, if our weight matching algorithm is artificially handicapped to a single pass over the layers, we achieve a similarly low $\sim 7\%$ top-1 accuracy.

\begin{figure}
\begin{center}
\includegraphics[width=0.45\textwidth]{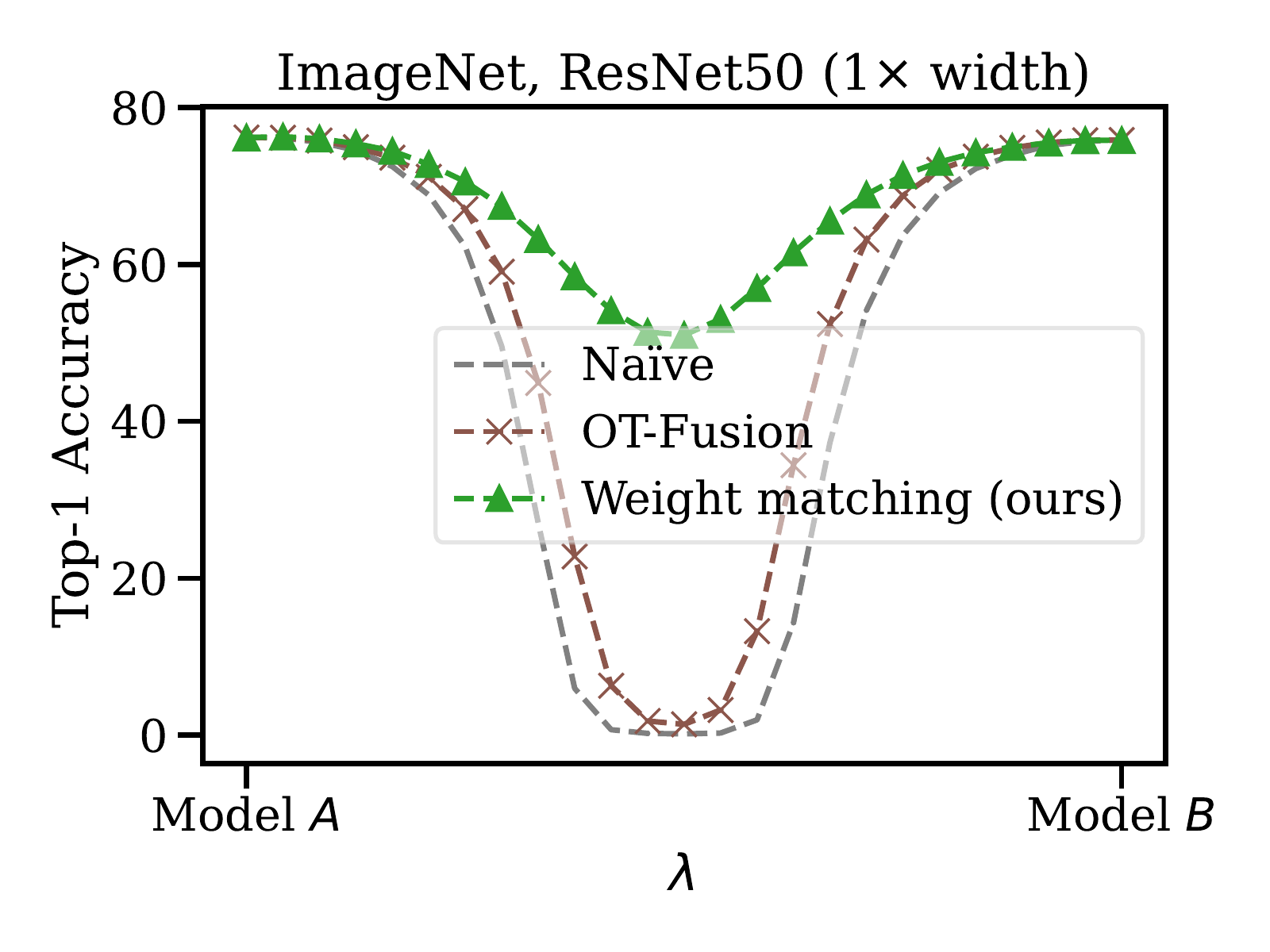}
\includegraphics[width=0.45\textwidth]{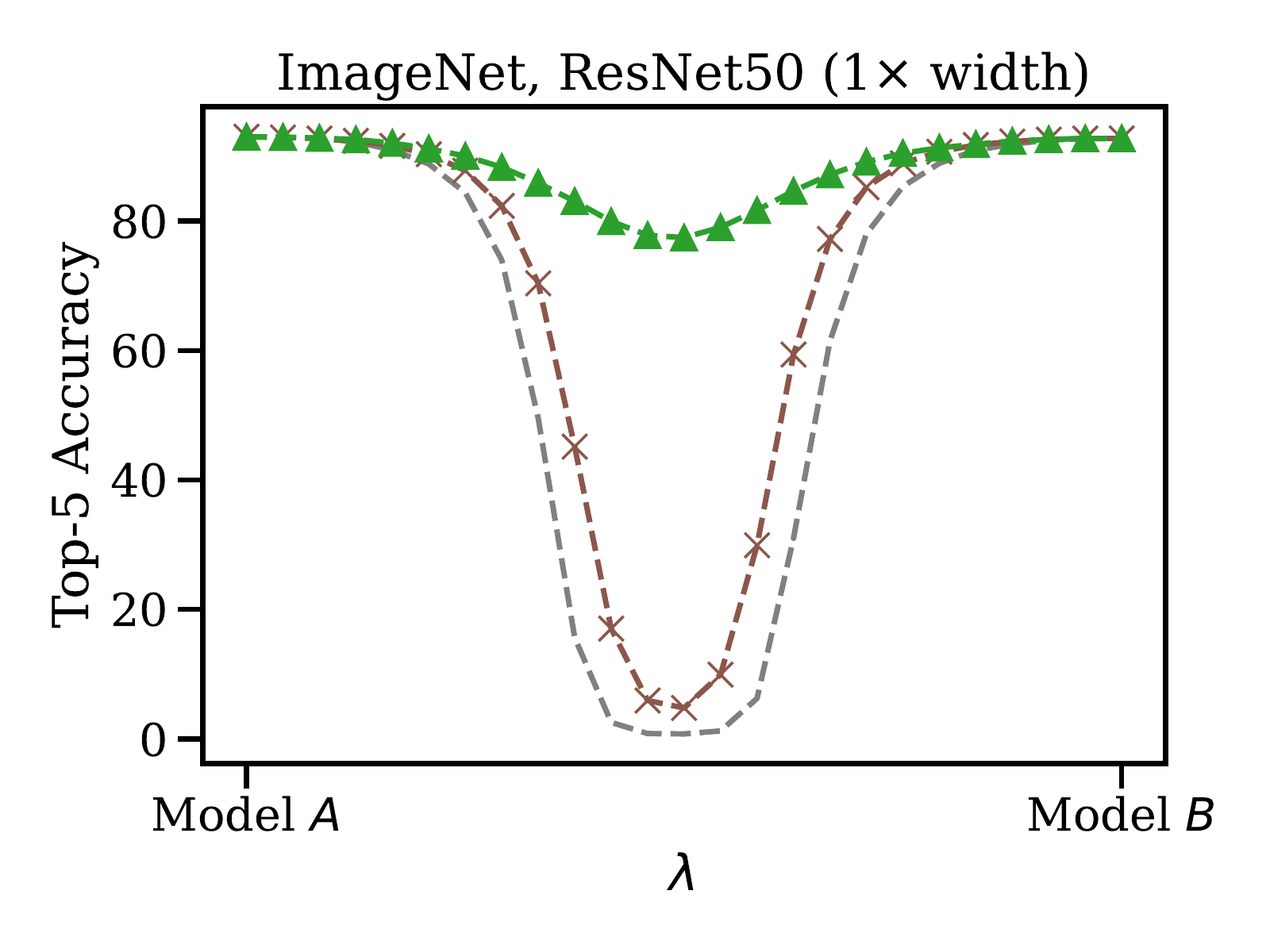}
\end{center}
\caption{\textbf{ResNet50/ImageNet performance relative to \citet{DBLP:conf/nips/SinghJ20}.} We found the OT-Fusion method resulted in models with 1.38\% top-1 accuracy on ImageNet, only marginally improving over naïve averaging. On the other hand, we achieve 51.01\%.}
\label{fig:otfusion_imagenet_comparison}
\end{figure}

\subsubsection{An Example Failure Case}
\label{sec:failures_of_greedy_uni-directional_matching/example}

Consider two networks, $A$ and $B$, with the objective that they capture the identity function $f(x)=x$,
\begin{align}
f_{\Theta_A}(x) &= \begin{bmatrix}
1 & 0
\end{bmatrix} 
\begin{bmatrix}
1 & 0\\
0 & \epsilon
\end{bmatrix} 
\begin{bmatrix}
1\\
1 + \epsilon
\end{bmatrix} x \\
f_{\Theta_B}(x) &= \begin{bmatrix}
0 & 1
\end{bmatrix}
\begin{bmatrix}
0 & 0\\
0 & 1
\end{bmatrix} 
\begin{bmatrix}
1\\
1 + \epsilon
\end{bmatrix} x
\end{align}
where $\epsilon > 0$ is some negligible constant. It can be seen that these reduce to $f_{\Theta_A}(x) = x$ and $f_{\Theta_B}(x) = (1+\epsilon)x$.

When aligning these models there are two possible opportunities for permutation, $\pi = \{\mP_1,\mP_2\}$. The permuted model $B$ then has the form
\begin{equation}
f_{\pi(\Theta_B)}(x) = \left( \begin{bmatrix}
0 & 1
\end{bmatrix}
\mP_2^\top \right)
\left( \mP_2
\begin{bmatrix}
0 & 0\\
0 & 1
\end{bmatrix} 
\mP_1^\top \right)
\left(
\mP_1
\begin{bmatrix}
1\\
1 + \epsilon
\end{bmatrix} \right) x
\end{equation}

Now, aligning with greedy uni-directional matching will result in the alignment $\pi_{gud} = \{ \mP_1=\mI, \mP_2=\mI \}$. On the other hand, our weight matching method (\Cref{alg:permutation_coordinate_descent}) results in $\pi_{wm} = \left\{ \mP_1= \begin{bmatrix}
0 & 1\\
1 & 0
\end{bmatrix}, \mP_2=\begin{bmatrix}
0 & 1\\
1 & 0
\end{bmatrix} \right\}$, regardless of the algorithm's execution order. 

Interpolating these matched models at $\lambda=0.5$, we have
\begin{align}
f_{\frac{1}{2}(\Theta_A + \pi_{gud}(\Theta_B))}(x) &= \begin{bmatrix}
0.5 & 0.5
\end{bmatrix} 
\begin{bmatrix}
0.5 & 0\\
0 & (1+\epsilon)/2
\end{bmatrix} 
\begin{bmatrix}
1\\
1 + \epsilon
\end{bmatrix} x &&= (0.5 + O(\epsilon))x \label{eq:gud_interp_result} \\
f_{\frac{1}{2}(\Theta_A + \pi_{wm}(\Theta_B))}(x) &= \begin{bmatrix}
1 & 0
\end{bmatrix}
\begin{bmatrix}
1 & 0\\
0 & \epsilon/2
\end{bmatrix} 
\begin{bmatrix}
1 + \epsilon/2\\
1 + \epsilon/2
\end{bmatrix} x &&= (1+\epsilon/2)x \label{eq:wm_interp_result}
\end{align}
%which can be reduced to $f_{\frac{1}{2}(\Theta_A + \pi_{glw}(\Theta_B))}(x) = (\frac{1}{2} + O(\epsilon))x$ and $f_{\frac{1}{2}(\Theta_A + \pi_{wm}(\Theta_B))}(x) = (1+\epsilon/2)x$. 
Here we can see that the greedy uni-directional matching (\Cref{eq:gud_interp_result}) results in a merged model that fails to represent the input, identity-function models. On the other hand, our weight matching algorithm (\Cref{alg:permutation_coordinate_descent}, \Cref{eq:wm_interp_result}) produces a merged model that accurately reflects both of the input models, and even improves performance over the $B$ model. \footnote{We note that this example can be extended to the more conventional presentation, including non-linear activation functions, by inserting ReLU activations in each layer and considering $x$ in the positive domain $\R^+$. The positive domain restriction can also be lifted by instead considering the task of absolute value estimation and adding layers $\sigma \circ
\begin{bmatrix}
1 & 1
\end{bmatrix}
\circ \sigma \circ
\begin{bmatrix}
1\\
-1
\end{bmatrix}$ to the beginning of the network.}

\subsection{Auxiliary Plots}
\label{sec:auxiliary_plots}

\begin{figure}
\begin{center}
\includegraphics[width=0.3\textwidth]{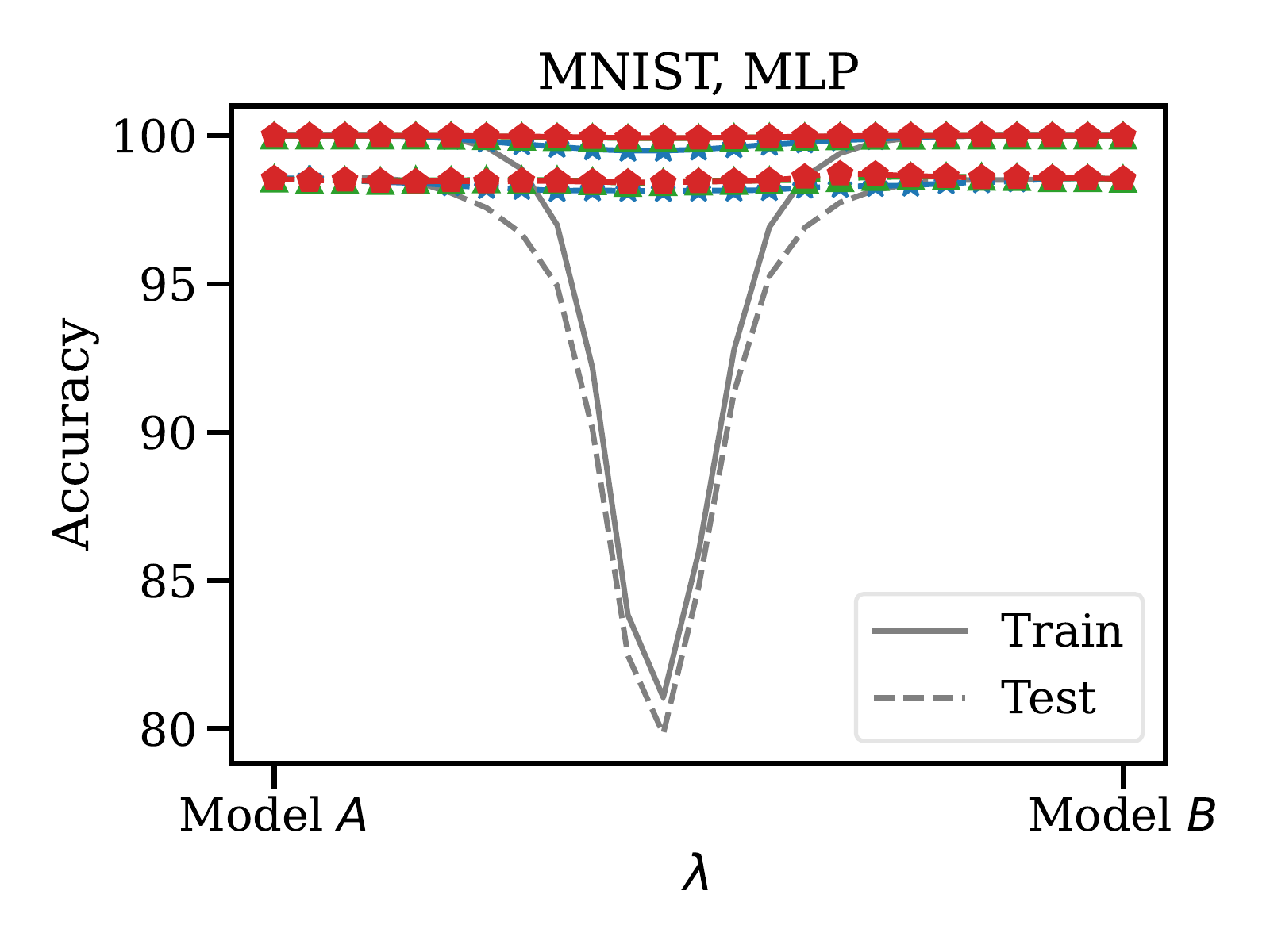}
\includegraphics[width=0.3\textwidth]{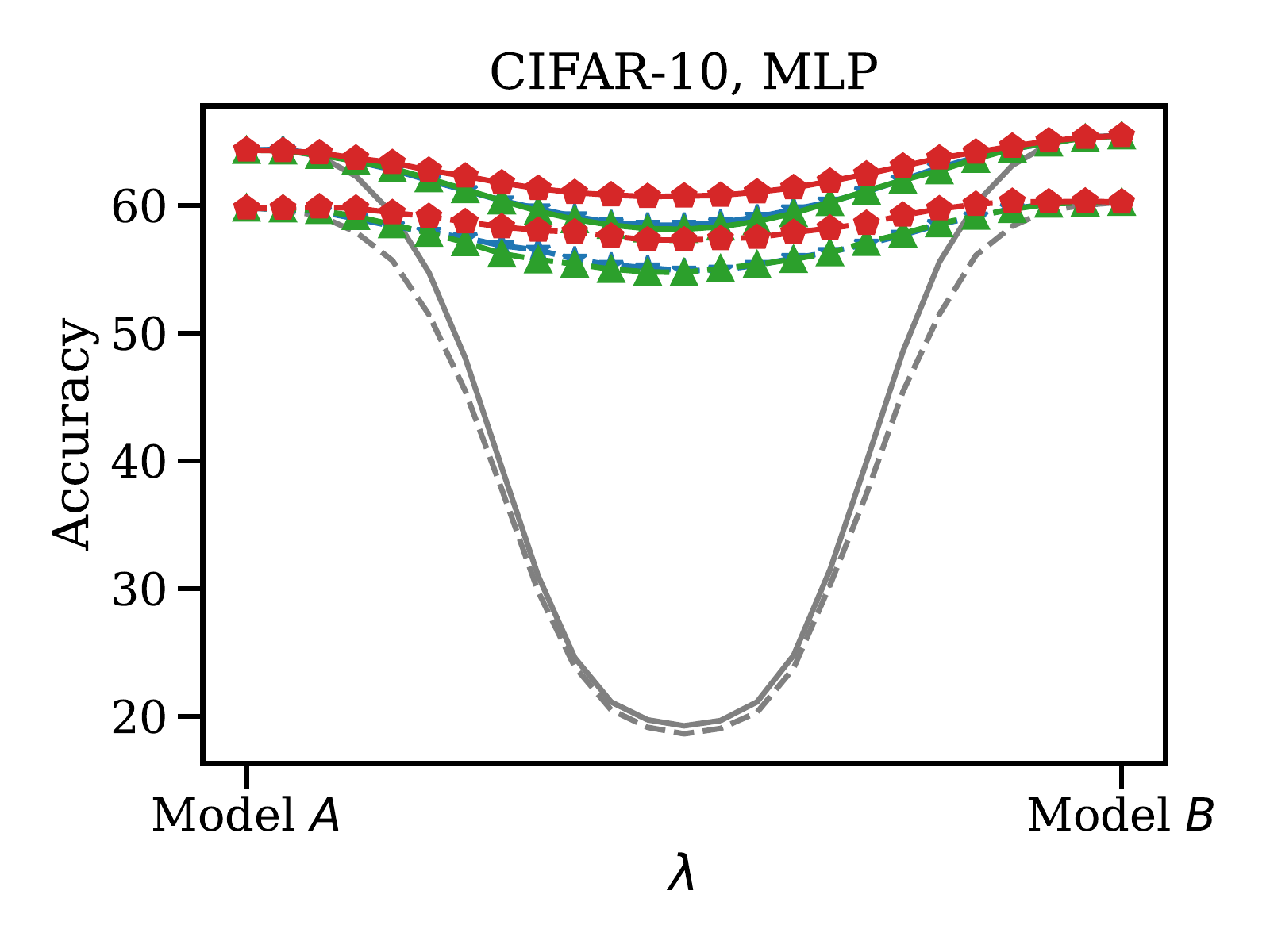}
\includegraphics[width=0.3\textwidth]{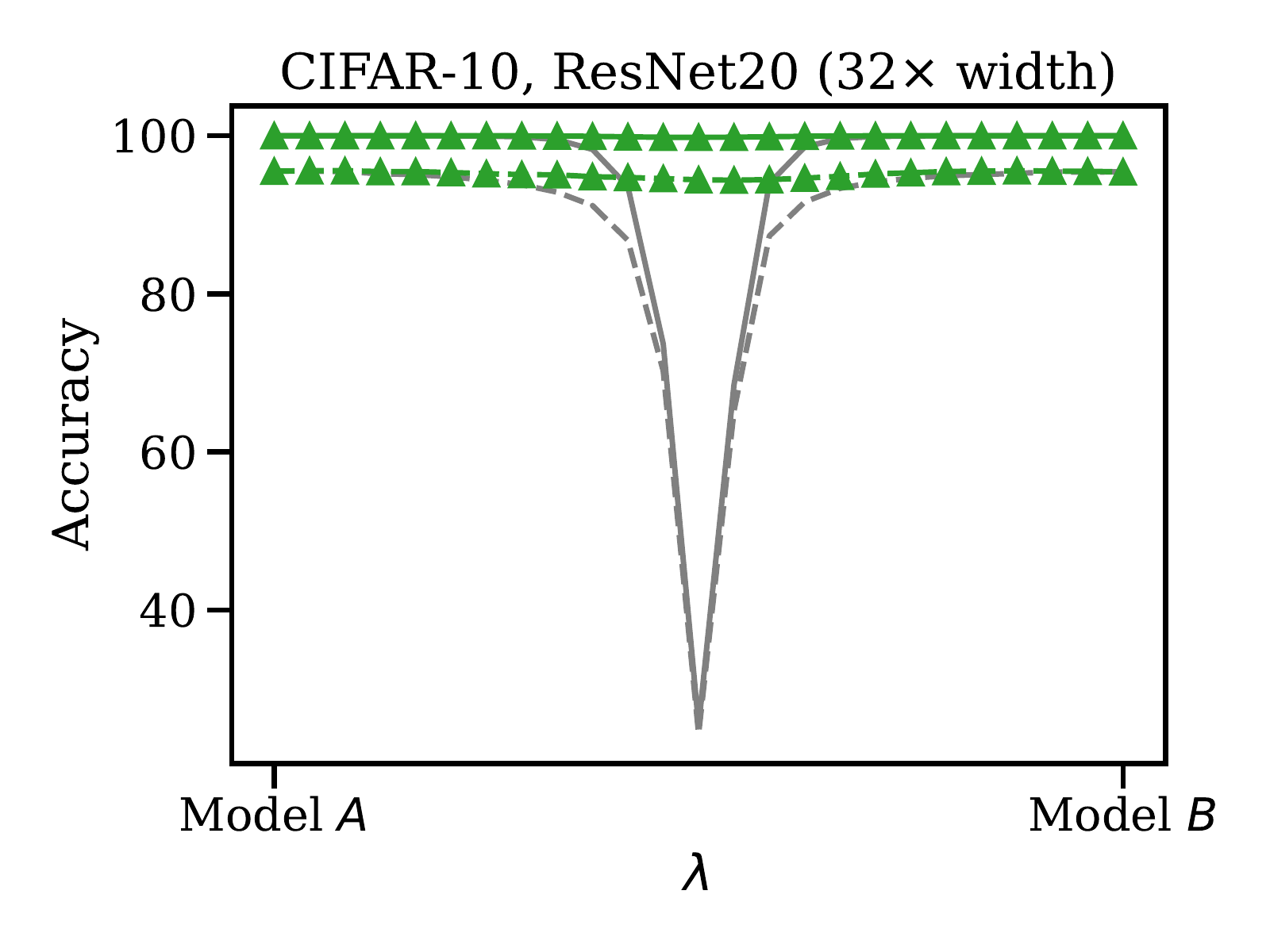}
\end{center}
\caption{Top-1 accuracy results for the MNIST and CIFAR-10 models of \Cref{fig:loss_interp_plots}.}
\end{figure}

\begin{figure}
\begin{center}
\includegraphics[width=0.45\textwidth]{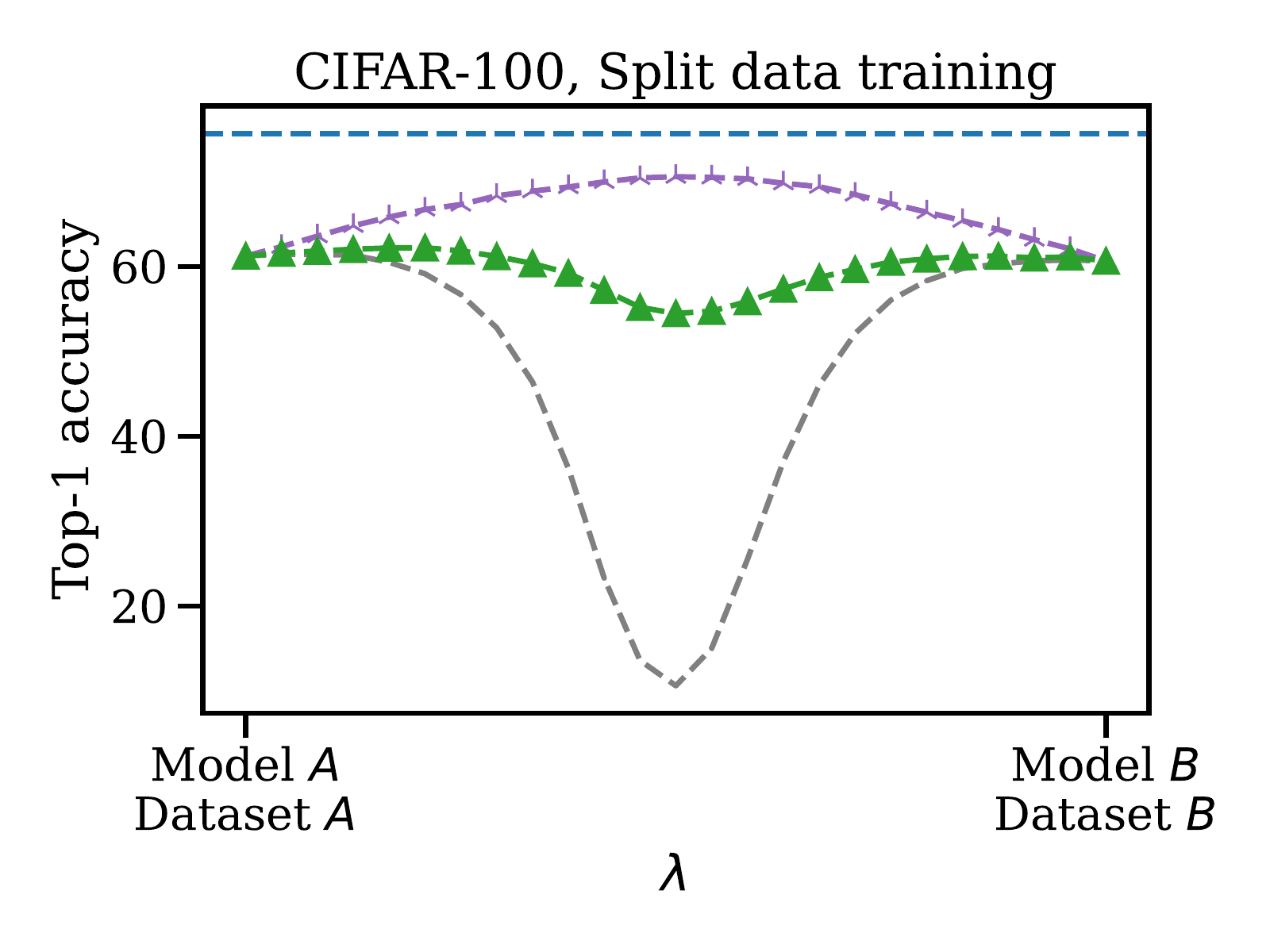}
\includegraphics[width=0.45\textwidth]{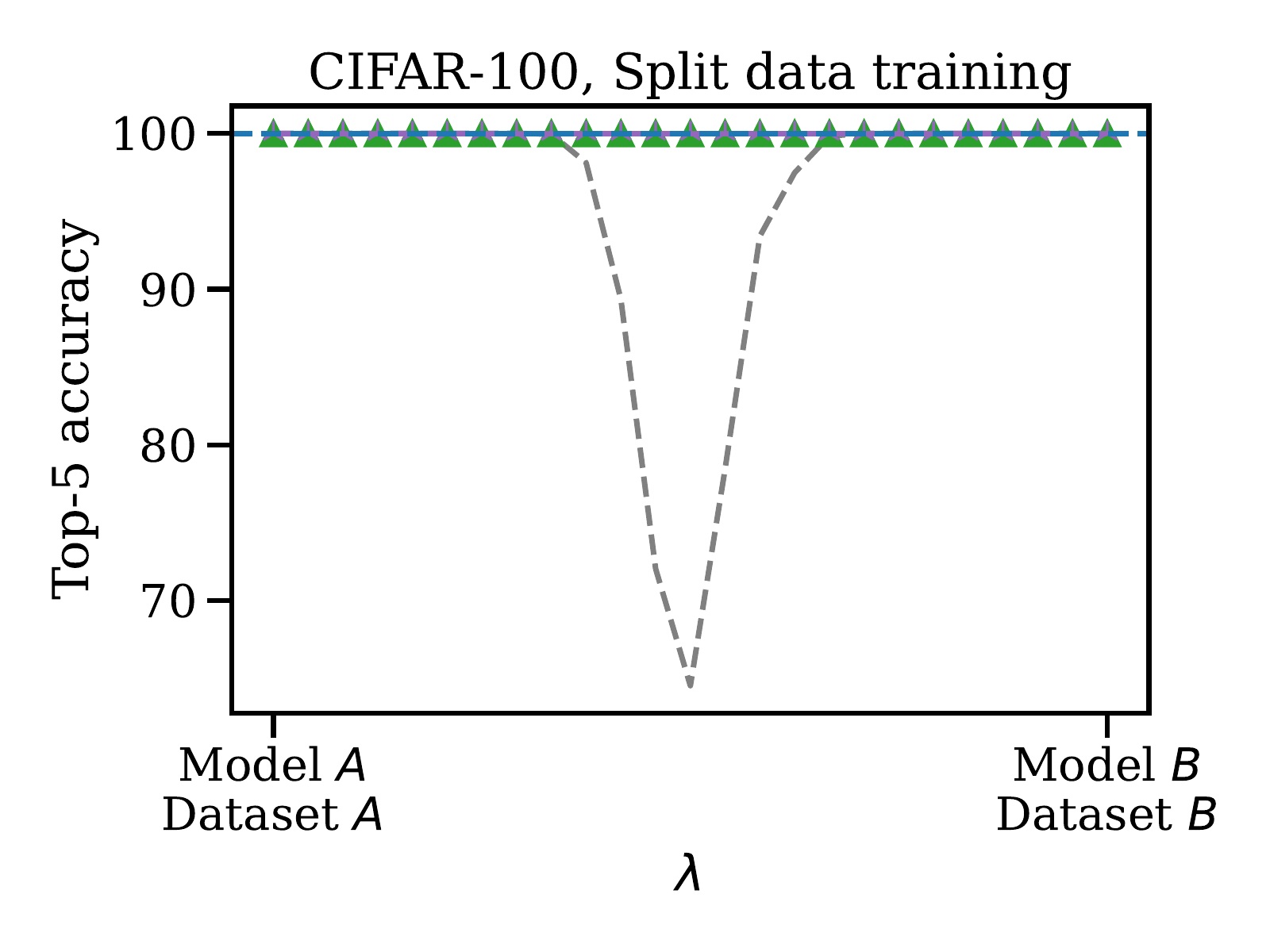}
\end{center}
\caption{Accuracy results for the CIFAR-100 split data experiment.}
\label{fig:cifar100_split_test_accuracy}
\end{figure}

\begin{figure}
\begin{center}
\includegraphics[width=0.95\textwidth]{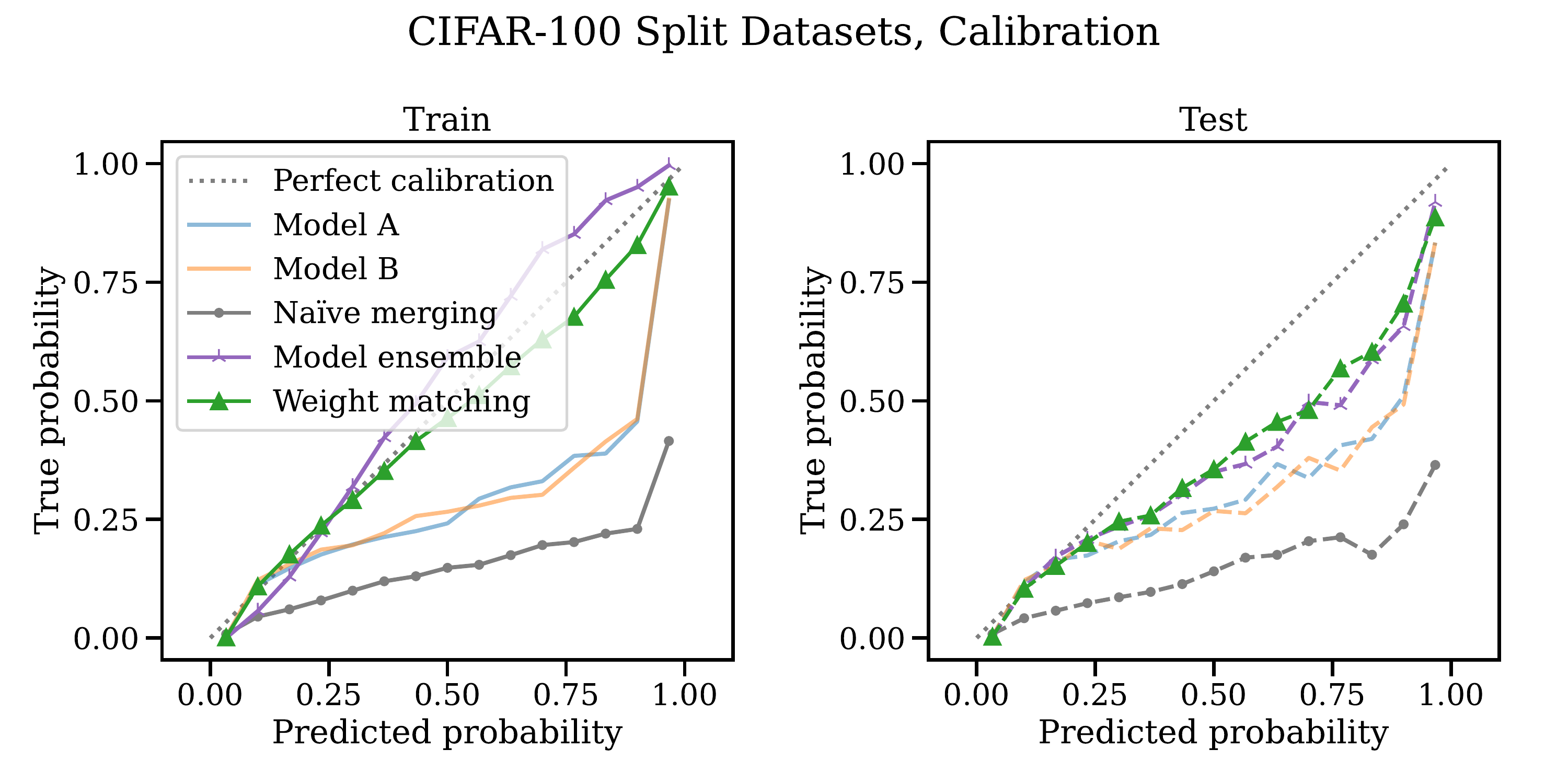}
\end{center}
\caption{\textbf{Merging CIFAR-100 split data models results in superior probability calibration.} Although our merged model is not competitive in terms of top-1 accuracy in the CIFAR-100 split data experiment, we find that it has far better calibrated probability estimates than either of the input models. In addition, we achieve calibration results on par with model ensembling while requiring $2\times$ less memory and compute.}
\label{fig:cifar100_split_calibration}
\end{figure}

\begin{figure}
\begin{center}
\includegraphics[width=0.45\textwidth]{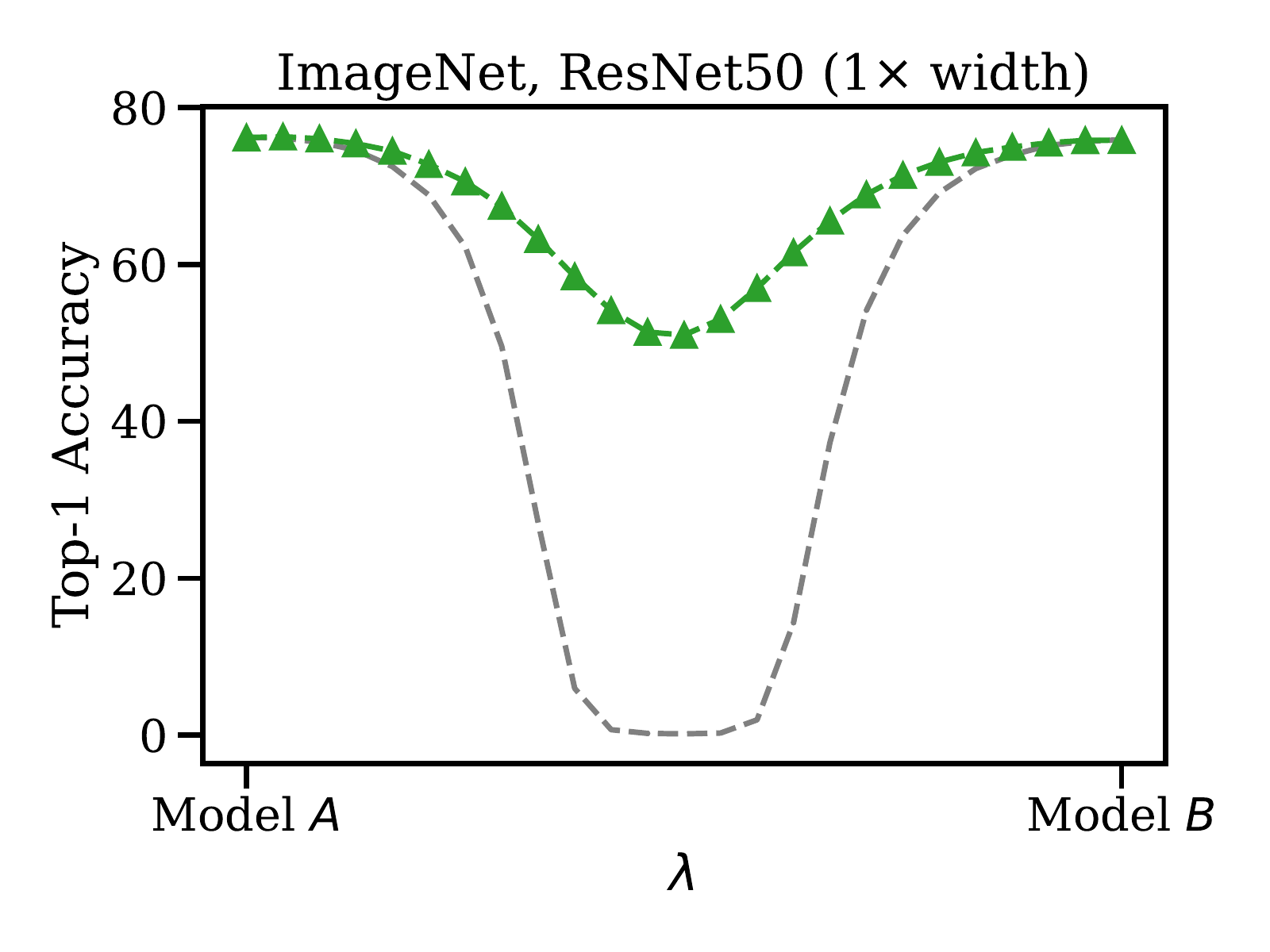}
\includegraphics[width=0.45\textwidth]{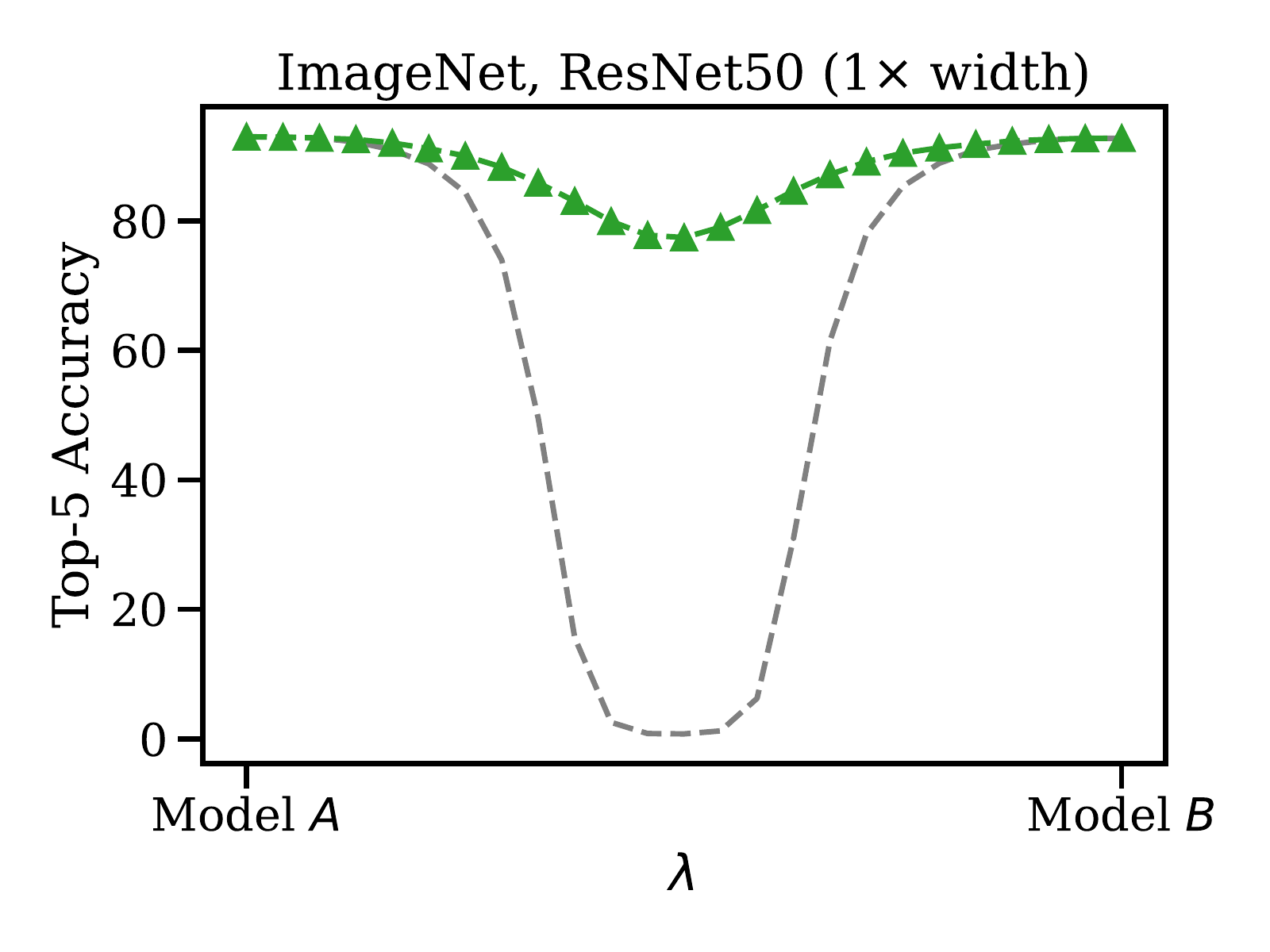}
\end{center}
\caption{Accuracy results for merged ResNet50 ($1\times$ width) models on ImageNet.}
\label{fig:imagenet_resnet50_accuracy}
\end{figure}

\subsection{Straight-through Estimator Details}

See \Cref{alg:ste} for a complete description of the straight-through estimator algorithm.

\begin{algorithm}
\SetAlgoLined
\DontPrintSemicolon
\vspace{1mm}
\SetKwInOut{Input}{Given}
\Input{Model weights $\Theta_A$, $\Theta_B$, and a learning rate $\eta$.}
\vspace{1mm}
\KwResult{A permutation $\pi$ of $\Theta_B$ such that $\mathcal{L}(\frac{1}{2}(\Theta_A + \pi(\Theta_B)))$ is approximately minimized.}
\vspace{-1mm}
\hrulefill
\vspace{1mm}

\textbf{Initialize:} $\tilde{\Theta}_B \gets \Theta_A$

\Repeat{convergence}{
$\pi(\Theta_B) \gets \textrm{proj}(\tilde{\Theta}_B)$ using \Cref{alg:permutation_coordinate_descent}. \\
Evaluate the loss of the midpoint, $\mathcal{L}(\frac{1}{2}(\Theta_A + \pi(\Theta_B)))$. \\
Evaluate the gradient, $\nabla \mathcal{L}$, using $\tilde{\Theta}_B$ in place of $\pi(\Theta_B)$ in the backwards pass. \\
Update parameters, $\tilde{\Theta}_B \gets \tilde{\Theta}_B - \eta \nabla\mathcal{L}$.
}

\caption{Straight-through estimator training}
\label{alg:ste}
\end{algorithm}

\subsection{Merging Many Models}
\label{sec:merging_many_models}

We propose \Cref{alg:merge-many} to merge the weights of more than two models at a time.

\begin{algorithm}
\SetAlgoLined
\DontPrintSemicolon
\vspace{1mm}
\SetKwInOut{Input}{Given}
\Input{Model weights $\Theta_1, \dots, \Theta_N$}
\vspace{1mm}
\KwResult{A merged set of parameters $\tilde{\Theta}$.}
\vspace{-1mm}
\hrulefill
\vspace{1mm}

% \textbf{Initialize:} $\mP_1 \gets \mI, \dots, \mP_{L-1} \gets \mI$

\Repeat{convergence}{
% \textrm{random_permutation}
    \For{$i \in \textsc{RandomPermutation}(1,\dots,N)$}{
        $\Theta' \gets \frac{1}{N-1}\sum_{j\in \{1,\dots,N\} \setminus \{i\}} \Theta_j$ \;
        \vspace{1.5mm} $\pi \gets \textsc{PermutationCoordinateDescent}(\Theta', \Theta_i)$ \;
        \vspace{1mm} $\Theta_i \gets \pi(\Theta_i)$
    }
}

\Return{$\frac{1}{N}\sum_{j=1}^N \Theta_j$}

\caption{\textsc{MergeMany}}
\label{alg:merge-many}
\end{algorithm}

Following an argument similar to \Cref{thm:permutation_coordinate_descent_terminates}, it can be seen that \Cref{alg:merge-many} terminates.

In our limited testing, we found that this algorithm converges quickly to solutions that extrapolate better than individual models and results in a merged model with better probability estimate calibration than any of the input models. For example, we present the results of this algorithm on MLPs trained on MNIST in Table~\ref{table:mnist_mlp_many_merge_results}. 

In addition, we found that merging multiple models helps to calibrate the resulting model predictions. We present this effect in \Cref{fig:mergemany_calibration}.

% 0.12929 is average test loss of the input models.
\begin{table}
\label{table:mnist_mlp_many_merge_results}
\begin{center}
\begin{tabular}{ccccc}
 & Train Loss & Train Acc. & Test Loss & Test Acc. \\
\hline
Seed 1 & 0.0000 & 1.0000 & 0.1153 & 0.9856 \\
Seed 2 & 0.0000 & 1.0000 & 0.1531 & 0.9854 \\
Seed 3 & 0.0000 & 1.0000 & 0.1229 & 0.9855 \\
Seed 4 & 0.0000 & 1.0000 & 0.1108 & 0.9865 \\
Seed 5 & 0.0000 & 1.0000 & 0.1443 & 0.9871 \\
\hline
% \Cref{alg:merge-many} & 0.0141 & 0.9952 & {\bf 0.0727} & 0.9831
\textsc{MergeMany} & 0.0141 & 0.9952 & {\bf 0.0727} & 0.9831
\end{tabular}
\end{center}
\caption{\textbf{Merging multiple models decreases test loss by 43\%.} We train five separate MLPs on MNIST. Using \Cref{alg:merge-many} we merge all these models together simultaneously. This produces a model that appears to have better out-of-distribution performance than any of the input models, with superior test loss performance. We are excited by potential applications of this methodology in federated learning and ensembling, esp. along the lines of ``model soups''~\citep{wortsman2022model}.}
\end{table}

\begin{figure}
\begin{center}
\includegraphics[width=0.9\textwidth]{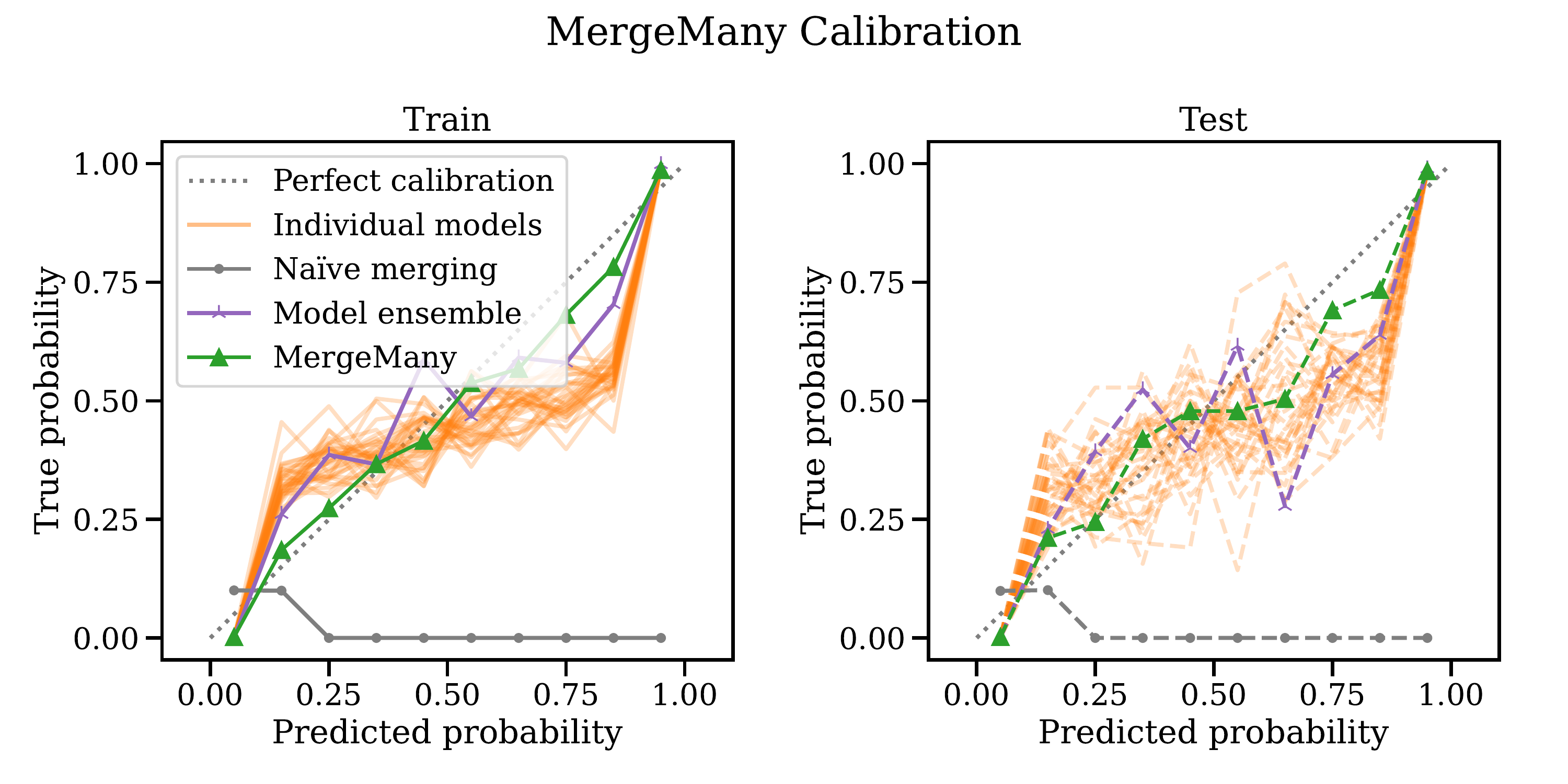}
\end{center}
\caption{\textbf{Merging multiple models results in superior calibration.} Here we show the results of running \Cref{alg:merge-many} on 32 MLP models trained on MNIST, with each model given access to a random 50\% of the training dataset. The resulting merged model demonstrates substantively improved calibration of probability estimates on both the training and test datasets. MergeMany calibration results are competitive with model ensembling, despite requiring $32\times$ less memory and compute.}
\label{fig:mergemany_calibration}
\end{figure}

\subsection{Failed Idea: A Method for Steepest Descent}
\label{sec:steepest_descent}

Imagine standing in weight space at $\Theta_A$ and trying to decide in which immediate direction to move in order to approach a $\Theta_B$-equivalent point. There are many, many possible permutations of $\Theta_B$ -- call them $\pi^{(1)}(\Theta_B), \pi^{(2)}(\Theta_B), \dots$ -- to aim for in the distance. Assuming that the loss landscape is in fact convex modulo these permutation symmetries, a natural choice would be to pick the $\pi^{(i)}(\Theta_B)$ that corresponds to the direction of steepest descent starting from $\Theta_A$ since we expect $\pi^{(i)}(\Theta_B)$ to lie in the same basin as $\Theta_A$. In other words,
\begin{align}
\min_\pi \; \frac{d\, \mathcal{L}(\Theta_A + \lambda( \pi(\Theta_B) - \Theta_A))}{d\, \lambda} \bigg\rvert_{\lambda=0} \quad &= \min_\pi \; \nabla \mathcal{L}(\Theta_A)^\top (\pi(\Theta_B) - \Theta_A) \\
&= -\nabla\mathcal{L}(\Theta_A)^\top\Theta_A + \min_\pi \; \nabla \mathcal{L}(\Theta_A)^\top \pi(\Theta_B)
\end{align}
Now, we are tenuously in a favorable situation: $\nabla\mathcal{L}(\Theta_A)$ is straightforward to compute, and picking the best $\pi$ reduces to a matching problem. In particular it is a SOBLAP matching problem of the same form as in \Cref{sec:matching_weights}. In addition, there is a fast, exact solution for the single intermediate layer case ($L=2$).

In practice, we found that this method can certainly find directions of steepest descent, but that they are accompanied by high barriers in between the initial dip and $\pi(\Theta_B)$.

% Given real-valued matrices, $M_1, \dots, M_n$, $N_1, \dots, N_n$, your challenge -- should you choose to accept it -- is to (efficiently) optimize,
% $$
% \min_{\left\{ P_i \right\} \textrm{ perm. matrices}} \quad \langle P_1 M_1, N_1 \rangle_F + \langle P_2 M_2 P_1^\top, N_2 \rangle_F + \langle P_3 M_3 P_2^\top, N_3 \rangle_F + \dots + \langle M_n P_{n-1}^\top, N_n \rangle_F
% $$
% where $\pi=\left\{ P_i \right\}$ is a collection of permutation matrices, and $\langle A, B \rangle_F = \sum_{i,j} A_{ij} B_{ij}$ denotes the Frobenius inner product between real-valued matrices $A$ and $B$.

% Noting that $\nabla \mathcal{L}(\Theta_A)^\top \Theta_A$ is constant, we can simply $\min_\pi \nabla\mathcal{L}(\Theta_A)^\top \pi(\Theta_B)$.

\subsection{Proof of \Cref{thm:soblap_is_np_hard}}
\label{sec:soblap_is_np_hard_proof}
To lighten notation we use $\langle \cdot, \cdot \rangle = \langle \cdot, \cdot \rangle_F$ in this section.

\begin{lemma*}
\label{thm:dbmp_is_np_hard_proof}
Given $\mA,\mB \in \R^{d\times d}$,
$$
\min_{\mP, \mQ \text{ perm. matrices}} \; \langle \mP \mA \mQ^\top, \mB \rangle
$$
is strongly NP-hard and has no PTAS.
\end{lemma*}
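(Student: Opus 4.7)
The plan is to establish both claims by reducing a classical combinatorial problem to this doubly bilinear matching problem (DBMP).

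First I would pass to the equivalent maximization form. Using that $\mP \vone \vone^\top \mQ^\top = \vone \vone^\top$ for any permutations $\mP, \mQ$, we have
$$\min_{\mP,\mQ}\langle \mP \mA \mQ^\top, \mB\rangle \;=\; \langle \vone\vone^\top, \mB\rangle \;-\; \max_{\mP,\mQ}\langle \mP(\vone\vone^\top - \mA)\mQ^\top, \mB\rangle,$$
so strong NP-hardness and inapproximability transfer directly from the maximization formulation, which unpacks as $\max_{\pi,\sigma\in S_d}\sum_{i,j} A_{ij} B_{\pi(i)\sigma(j)}$.

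For NP-hardness, I would reduce from Maximum Balanced Biclique. Given a bipartite graph $G=(U\cup V, E)$ with $|U|=|V|=d$ and integer $t$, I set $\mA \in \{0,1\}^{d\times d}$ to be the biadjacency matrix of $G$ and let $\mB$ be the indicator of the top-left $t\times t$ block, i.e., $B_{ij}=1$ iff $i,j\leq t$. Then $\sum_{i,j} A_{ij} B_{\pi(i)\sigma(j)}$ counts the number of edges of $G$ between $\{u_i : \pi(i)\leq t\}$ and $\{v_j : \sigma(j)\leq t\}$, which equals $t^2$ iff $G$ contains a $K_{t,t}$ biclique. Since Balanced Biclique is NP-hard and the reduction uses only $0/1$ entries of polynomial magnitude, strong NP-hardness of DBMP follows.

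For the no-PTAS claim, the biclique reduction alone is insufficient under the sole assumption $\mathrm{P}\neq\mathrm{NP}$, since the known inapproximability of Balanced Biclique relies on additional hypotheses. I would therefore additionally reduce from a MAX-SNP-hard problem (e.g.\ MAX-3-DM or Max-E3-SAT), encoding variable assignments and clauses as block structures in $\mA$ and $\mB$ with large penalty weights $M$ placed in auxiliary rows/columns so that any near-optimal $(\mP,\mQ)$ is forced to correspond to a valid combinatorial assignment. The PCP-derived constant gap of the source problem then translates into a constant gap for DBMP, ruling out a PTAS.

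The main obstacle is this last step. Our cost tensor has the rank-one structure $c_{ikjl} = A_{kl} B_{ij}$ rather than the dense $4$-tensor of the general Bilinear Assignment Problem, so standard BAP/QAP hardness gadgets must be reshaped into contributions expressible as outer products of $\mA$ and $\mB$. The delicate technical point is preventing the two independent permutations from decoupling and ``cheating'' the encoding; this typically requires simultaneously placing carefully matched large-weight block penalties in both $\mA$ and $\mB$ so that any high-value solution is forced to align $\mP$ and $\mQ$ consistently with the intended combinatorial structure.
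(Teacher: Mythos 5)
Your NP-hardness argument is correct and takes a genuinely different, more elementary route than the paper's: the paper reduces from the quadratic assignment problem, whereas your Balanced Biclique reduction (biadjacency matrix as $\mA$, indicator of the top-left $t\times t$ block as $\mB$) uses only $0/1$ entries and immediately yields strong NP-hardness of the decision version. That half of the lemma you have essentially proved, and it is arguably cleaner and more self-contained than the paper's for this purpose.

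The no-PTAS half, however, is a genuine gap. What you give there is a plan, not a proof: you propose to encode a MAX-SNP-hard problem with block structures and large penalties, and you yourself identify the crux --- the cost tensor has the rank-one form $c_{ikjl}=A_{kl}B_{ij}$, so the two independent permutations can decouple and cheat any naive gadget --- but you never construct a gadget that overcomes it. This is exactly the obstacle the paper's proof resolves, and it does so with one trick that delivers both halves of the lemma simultaneously: starting from a QAP instance $\min_{\mP}\langle \mP\mC\mP^\top,\mD\rangle$, set $\mA=\mC+\lambda\mI$ and $\mB=\mD-\lambda\mI$. Expanding $\langle \mP\mA\mQ^\top,\mB\rangle$ produces the cross term $-\lambda^2\tr(\mP\mQ^\top)$, which rewards $\mP=\mQ$; choosing $\lambda>5d\alpha$ (with $\alpha$ the largest entry magnitude of $\mC,\mD$, so $\lambda$ remains polynomially bounded and strong NP-hardness is preserved) makes every aligned solution beat every misaligned one, forcing $\mP=\mQ$. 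The objective then collapses to the QAP objective plus constants, so strong NP-hardness and the absence of a PTAS are inherited together from known QAP/MaxQAP results (Sahni--Gonzalez; Makarychev--Manokaran--Sviridenko). To complete your version you would need to make your penalty construction concrete, and the $\pm\lambda\mI$ diagonal shift is precisely such a construction --- at which point you are back to the paper's proof. One further caution: your opening claim that inapproximability ``transfers directly'' across the min/max conversion via the shift $\langle\vone\vone^\top,\mB\rangle$ is not sound as stated, since additive shifts of the objective do not preserve multiplicative approximation guarantees; any gap argument must be carried out in the formulation in which the gap is actually stated.
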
 
\begin{proof}
We proceed by reduction from the quadratic assignment problem (QAP)~\citep{10.2307/1907742,cela2013quadratic}. Consider a QAP,
$$
\min_{\mP \text{ perm. matrix}} \; \langle \mP \mC \mP^\top, \mD \rangle
$$
for $\mC, \mD \in \R^{d\times d}$.

Now, pick $\mA = \mC+\lambda \mI$, $\mB = \mD - \lambda \mI$. The we have,
\begin{align}
\min_{\mP, \mQ} \; \langle \mP (\mC+\lambda\mI)\mQ^\top, \mD-\lambda\mI\rangle &= \langle \mP \mC \mQ^\top + \lambda\mP\mQ^\top, \mD-\lambda\mI\rangle \\
&= \langle \mP\mC\mQ^\top, \mD\rangle - \lambda\langle\mP\mC\mQ^\top, \mI\rangle + \lambda \langle\mP\mQ^\top, \mD\rangle - \lambda^2\langle \mP\mQ^\top, \mI\rangle \\
&= \langle \mP\mC\mQ^\top, \mD\rangle - \lambda\langle\mP^\top\mQ, \mC\rangle + \lambda \langle\mP\mQ^\top, \mD\rangle - \lambda^2 \tr(\mP\mQ^\top)
\end{align}
For sufficiently large $\lambda$, the $\tr(\mP\mQ^\top)$ term will dominate. Letting $\alpha=\max \left( \max_{i,j} |C_{i,j}|, \max_{i,j} |D_{i,j}| \right)$, we can bound the other terms,
\begin{alignat}{4}
-d^2 \alpha^2 &\leq && \langle\mP\mC\mQ^\top, \mD\rangle \; &&&\leq d^2 \alpha^2 \\
-\lambda d \alpha &\leq - &&\lambda \langle\mP^\top\mQ, \mC\rangle &&&\leq \lambda d \alpha \\
-\lambda d \alpha &\leq &&\lambda \langle \mP\mQ^\top, \mD\rangle &&&\leq \lambda d \alpha
\end{alignat}
Now there are two classes of solutions: those where $\mP=\mQ$ and those where $\mP\neq\mQ$. We seek to make the best (lowest) possible $\mP\neq\mQ$ solution to have worse (higher) objective value than the worst (highest) $\mP=\mQ$ solution. When $\mP=\mQ$, the highest possible objective value is
$$
d^2 \alpha^2 + \lambda d\alpha + \lambda d\alpha - \lambda^2 d
$$
and similarly, the lowest possible objective value when $\mP\neq\mQ$ is
$$
-d^2\alpha^2 - \lambda d\alpha - \lambda d \alpha -\lambda^2 d + \lambda^2
$$
where the final term is due to the fact that at least one entry of $\mP\mQ^\top$ must be 0. With some algebra, it can be seen that $\lambda > 5d\alpha$ is sufficient to guarantee that all $\mP=\mQ$ solutions are superior to all $\mP\neq\mQ$ solutions.

Now when $\mP=\mQ$, all frivolous terms reduce to constants and we are left with the QAP objective:
\begin{align*}
\min_\mP \; \langle \mP\mC\mP^\top, \mD\rangle - \lambda\langle\mP^\top\mP, \mC\rangle + \lambda \langle\mP\mP^\top, \mD\rangle - \lambda^2 \tr(\mP\mP^\top) \\
\quad = - \lambda \tr(\mC) + \lambda \tr(\mD) - \lambda^2 d + \min_\mP \; \langle \mP\mC\mP^\top, \mD\rangle 
\end{align*}
completing the reduction. QAP is known to be strongly NP-hard~\citep{10.2307/1907742,DBLP:journals/jacm/SahniG76} and MaxQAP is known to not admit any PTAS~\citep{DBLP:journals/talg/MakarychevMS14}, thus completing the proof.
\end{proof}

\subsection{Proof of \Cref{thm:permutation_coordinate_descent_terminates}}
\label{sec:permutation_coordinate_descent_terminates_proof}

\begin{lemma*}
\label{thm:permutation_coordinate_descent_terminates_proof}
\Cref{alg:permutation_coordinate_descent} terminates.
\end{lemma*}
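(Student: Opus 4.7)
The plan is to exhibit a monotone, bounded potential function on the finite state space of permutation tuples and argue that any monotone sequence on a finite set must stabilize.

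Define the potential
\[
\Phi(\mP_1,\dots,\mP_{L-1}) \;=\; \sum_{\ell=1}^{L} \langle \mW_\ell^{(A)},\, \mP_\ell\, \mW_\ell^{(B)}\, \mP_{\ell-1}^\top\rangle_F,
\]
with the convention $\mP_0 = \mP_L = \mI$. This is precisely the SOBLAP objective that \Cref{alg:permutation_coordinate_descent} is designed to (locally) maximize. First I would verify the key algebraic identity already derived in the body of the paper: holding every $\mP_k$ with $k\neq \ell$ fixed, $\Phi$ depends on $\mP_\ell$ only through the linear term $\langle \mP_\ell, \mM_\ell\rangle_F$, where $\mM_\ell = \mW_\ell^{(A)} \mP_{\ell-1} (\mW_\ell^{(B)})^\top + (\mW_{\ell+1}^{(A)})^\top \mP_{\ell+1} \mW_{\ell+1}^{(B)}$ is exactly the cost matrix handed to \textsc{SolveLAP}.

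Next, I would show weak monotonicity: since \textsc{SolveLAP} returns a maximizer of $\langle \mP, \mM_\ell\rangle_F$ over all permutation matrices, and the current $\mP_\ell$ is itself a feasible permutation, the updated value of $\Phi$ is at least the old value. Hence $\Phi$ is non-decreasing through every inner loop step, and therefore across each full outer pass. Because each $\mP_\ell$ ranges over the finite group $S_{d_\ell}$, the potential $\Phi$ takes only finitely many values, so the bounded non-decreasing sequence $(\Phi_t)$ stabilizes after finitely many steps.

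Finally, I would tie stabilization to algorithmic termination by fixing the convergence criterion: the algorithm halts once a full outer pass leaves every $\mP_\ell$ unchanged. Once $\Phi$ reaches its eventual value, every inner step must leave $\Phi$ constant, and by choosing a deterministic tie-breaking rule inside \textsc{SolveLAP} (returning the incumbent whenever it already achieves the LAP optimum), no $\mP_\ell$ is modified during such a pass, so the halting condition is triggered. The only real subtlety, and the step I would spend the most care on, is this tie-breaking issue: without a deterministic rule one could in principle cycle among permutations of equal objective value, so the proof must either bake the ``keep incumbent on ties'' convention into the statement of \textsc{SolveLAP}, or strengthen the potential (e.g.\ lexicographically) to break ties; both routes are routine but need to be made explicit.
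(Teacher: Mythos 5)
Your proof follows essentially the same route as the paper's: the paper uses the identical potential $\rho(\pi)=\vec(\Theta_A)\cdot\vec(\pi(\Theta_B))$ (which equals your $\Phi$, since the inner product decomposes layer-wise) together with finiteness of the set of permutation tuples, phrased as a no-cycle contradiction on a directed graph rather than as a stabilizing bounded monotone sequence. The one substantive difference is that you are more careful than the paper: its proof simply asserts that every permutation-changing update strictly increases $\rho$, which --- exactly as you note --- is only valid under a keep-the-incumbent-on-ties convention in \textsc{SolveLAP} (or some other tie-breaking device), a subtlety the paper's argument glosses over.
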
 
\begin{proof}
We proceed by contradiction.

Consider a graph with each possible permutation $\pi_i = \left\{ \mP_1,\dots,\mP_{L-1} \right\}$ as a vertex and directed edges $\pi_i \to \pi_j$ if $\pi_j$ can be reached from $\pi_i$ with a single $\mP_\ell$ update, as in \Cref{alg:permutation_coordinate_descent}. (Ignore those updates that result in no change to $\mP_\ell$ in order to avoid $\pi_i \to \pi_i$ cycles.) Let $\rho(\pi) = \vec(\Theta_A) \cdot \vec(\pi(\Theta_B))$ denote the utility of a particular $\pi$. Note that $\pi_i \to \pi_j$ implies $\rho(\pi_i) < \rho(\pi_j)$. There exist finitely many possible permutations $\pi_i$, meaning that a failure to terminate must involve a cycle in the graph $\pi_1 \to \dots \to \pi_n \to \pi_1$. However $\rho$ forms a total order on the vertices and therefore we have a contradiction.
\end{proof}

\end{document}